\theoremstyle{plain}
\newtheorem{theorem}	 			{Theorem}
\newtheorem{lemma}		[theorem]	{Lemma}
\newtheorem{corollary}		[theorem]	{Corollary} 
\theoremstyle{definition}
\newtheorem{definition}[theorem]{Definition} 
\theoremstyle{remark} 
\title{Noise in Classification\thanks{
Chapter 16~\citep{balcan2020chapter} of the book Beyond the Worst-Case Analysis of Algorithms~\citep{roughgarden2020bwca}}}
\author{Maria-Florina Balcan\\ Carnegie Mellon University\\\texttt{ninamf@cs.cmu.edu} \and Nika Haghtalab\\ Cornell University\\University of California, Berkeley\\\texttt{nika@berkeley.edu}}
\date{}
\begin{document}

\maketitle
\renewcommand{\vec}[1]{\mathbf{#1}}

\newcommand{\red}[1]{{\color{red}#1}}
\newcommand{\blue}[1]{{\color{blue}#1}}

\newcommand{\X}{\mathcal{X}}
\newcommand{\Y}{\mathcal{Y}}
\newcommand{\D}{\mathcal{D}}
\renewcommand{\P}{\mathcal{P}}
\newcommand{\F}{\mathcal{F}}
\renewcommand{\O}{\mathcal{O}}
\newcommand{\R}{\mathbb{R}}
\newcommand{\Z}{\mathbb{Z}}

\newcommand{\EE}{\operatorname*{\mathbb{E}}}
\newcommand{\coloneqq}{\operatorname*{:=}}
\newcommand{\argmin}{\operatorname*{argmin}}
\newcommand{\err}{\operatorname*{\mathrm{err}}}
\newcommand{\opt}{\mathrm{opt}}

\setcounter{page}{1}

\newcommand{\darrow}{{\downarrow}}

\newcommand{\sign}{\mathrm{sign}}
\newcommand{\vcd}{\mathrm{dim}}
\newcommand{\poly}{\mathrm{poly}}

\newcommand{\Card}[1]{\left|#1\right|}
\newcommand{\Parens}[1]{\left(#1\right)}
\newcommand{\Braces}[1]{\left\{#1\right\}}
\newcommand{\Bracks}[1]{\left[#1\right]}

\newcommand{\card}[1]{|#1|}
\newcommand{\parens}[1]{(#1)}
\newcommand{\braces}[1]{\{#1\}}
\newcommand{\bracks}[1]{[#1]}

\allowdisplaybreaks

\begin{abstract}
This chapter considers the computational and statistical aspects of learning linear thresholds in presence of noise. When there is no noise, several algorithms exist that  efficiently learn near-optimal linear thresholds using a small amount of data. However, even a small amount of adversarial noise makes this problem notoriously  hard in the worst-case. 
We discuss approaches for dealing with these negative results by exploiting natural assumptions on the  data-generating process.
\end{abstract}

\section{Introduction}

Machine learning studies automatic methods for making  accurate predictions and useful decisions based on previous observations and experience.
From the application point of view, machine learning has become a successful discipline for operating in complex domains such as natural language processing, speech recognition, and computer vision.
Moreover, the theoretical foundations of machine learning have led to the development of powerful and versatile techniques, which are routinely used in a wide range of commercial systems in today's world. However, a major challenge of increasing importance in the theory and practice of machine learning is to provide algorithms that are robust to adversarial noise.

In this chapter, we focus on \emph{classification} where the goal is to learn a classification rule from labeled examples only.
Consider for example the task of automatically classifying social media posts as either appropriate or inappropriate for publication.
To achieve this one can examine past social media posts and their features--- such as the author, bag of words, and hashtags---and whether they were appropriate for publication.
This data can be used to learn a classifier that best decides whether a newly posted article is appropriate for publication, e.g., by finding the best parameters for a linear classifier that makes such predictions.
While classification is one of the most commonly used paradigms of machine learning in practice, from the worst-case perspective, it is often computationally hard and requires an amount of information that is seldom available. 
One of the main challenges in classification is the presence of \emph{noise} in data sets.
For example, a labeler may be making mistakes when deciding whether a post is appropriate for publication. It is also possible that the correct decision is not a  linear separator or
even that there is no perfect classification rule.  The latter can happen when, for example, one appropriate post and one inappropriate post map to the same feature vector.
Indeed, when the noise in classification is adversarially designed, classification is believed to be hard.
On the other hand, there is an increasing need for learning algorithms that can withstand intentionally adversarial behavior in their environment, e.g., large fraction of inappropriate social media posts are made with the intention to pass through the deployed classifier.
Therefore, it is essential to provide a theoretical grounding for the performance of learning algorithms in presence of real-life adversaries.

In this chapter, we go beyond the worst-case in studying noise in classification with a focus on learning linear threshold classifiers.
Learning linear thresholds is a canonical problem in machine learning that serves as a backbone for several other learning problems, such as  support vector machines and neural networks. In the absence of noise, several computationally efficient algorithms exist that can learn highly accurate linear thresholds. However, introducing even a small amount of adversarial noise makes this problem notoriously intractable with a runtime that is prohibitively large in the number of features.
In this chapter, we present some of the recent progress on learning linear thresholds in presence of noise under assumptions on the data-generating process.
The first approach considers restrictions on the marginal distribution of instances, e.g., log-concave or Gaussian distributions. The second approach additionally considers how the true classification of instances match those expressed by the most accurate classifier one is content with, e.g., assuming that the Bayes optimal classifier is also a linear threshold. At a technical level, many of the results in this chapter 
contribute to and draw insights from high dimensional geometry to limit the effects of noise on learning algorithms.

We describe the formal setup in Section~\ref{sec:22:model}.
In Section~\ref{sec:22:worst}, we overview the classical worst-case and best-case results on computational and statistical aspects of classification.
In Section~\ref{sec:22:marginal}, we showcase general assumptions on the marginal distribution of instances that lead to improved computational performance.
In Section~\ref{sec:22:noise}, we obtain further computational and statistical improvements by investigating additional assumptions on the nature of the label noise.
We end this chapter by putting the work into a broader context.

\section{Model}
\label{sec:22:model}
We consider an instance space $\X$ and a set of labels $\Y = \{ -1, +1\}$.
A \emph{classifier} is a function $f: \X \rightarrow \Y$ that maps an instance $x \in \X$ to its classification $y$.
For example, 
$x$ can represent a social media post and $y$ can indicate whether the post was appropriate for publication.
We consider a set of classifiers $\F$.
We denote the VC dimension of $\F$, which measures the expressibility of $\F$, by $\vcd(\F)$.\footnote{VC dimension is the size of the largest $X\subseteq \X$ that can be labeled in all possible ways using functions in $\F$.}
We further consider a distribution $\D$ over $\X\times\Y$.
While $\D$ is assumed to be unknown, we assume access to a set $S$ of i.i.d. samples from $\D$.
For a classifier $f$, we represent its \emph{expected} and \emph{empirical error}, respectively, by $\err_\D(f) = \Pr_{(x,y)\sim \D}\left[y \neq f(x) \right]$ and $ \err_S(f) = \frac{1}{\card{S}} \sum_{(x,y)\in S} \mathbf{1}_{\left( y \neq f(x) \right) }.$
\emph{Classification} is the task of learning a classifier with near optimal error from a set of classifiers $\F$, i.e., finding $f$ such that $\err_\D(f) \leq \opt + \epsilon$, where $\opt = \min_{f^*\in \F} \err_\D(f^*)$. 
Classification is considered under a number of settings.
\emph{Agnostic learning}  refers to a setting where no additional assumptions are made regarding the set of classifiers $\F$ or the distribution of instances $\D$.
\emph{Realizable  learning} refers to a setting where there is  $f^* \in \F$ such that $\err_\D(f^*) = 0$, in which case, one is looking for a classifier $f$ with $\err_\D(f) \leq \epsilon$.

In parts of this chapter, we work with the class of linear threshold classifiers. That is, we assume that the input space is $\X = \R^d$ for some $d \in \mathbb{N}$ and refer to an instance by its $d$-dimensional vector representation $\vec x \in \R^d$. A \emph{homogeneous linear threshold} classifier, also called a \emph{halfspace through the origin}, is a function $h_{\vec w}(\vec x) = \sign(\vec w \cdot \vec x)$ for some unit vector $\vec w\in \R^d$. The VC dimension of the class of $d$-dimensional homogeneous linear thresholds is $\vcd(\F) = d$.

\section{The Best Case and The Worst Case}
\label{sec:22:worst}
In this section, we review the computational and statistical aspects of classification at the opposite ends of the difficulty spectrum---the realizable and agnostic settings.

\subsection{Sample Complexity}
\label{sec:22:sample-complexity}
It is well-known that to find a classifier of error $\epsilon$ in the realizable setting, all one needs to do is to take a set of $\tilde\Theta(\vcd(\F)/\epsilon)$ i.i.d. samples from $\D$ and choose $f\in \F$ that perfectly classifies all of these samples.\footnote{We use notation $\tilde\Theta$ to hide logarithmic dependence on $1/\epsilon$ and $1/\delta$.} Formally, for any $\epsilon, \delta \in (0, 1)$, there is
\[ m^{real}_{\epsilon, \delta} \in O\left(\frac{1}{\epsilon} \Parens{\vcd(\F) \ln\Big( \frac 1 \epsilon\Big) + \ln\Big(\frac 1\delta\Big)} \right)
\]
such that with probability $1-\delta$ over $S\sim \D^{m^{real}_{\epsilon, \delta}}$,  if $\err_S(f) = 0$ then $\err_\D(f)\leq \epsilon$.

In most applications of machine learning, however, either the perfect classifier is much more complex than those included in $\F$ or there is no way to perfectly classify instances.
This is where agnostic learning comes in. With no assumptions on the performance of classifiers one instead chooses $f\in \F$ with least empirical error. 
One way to make this work is to estimate the error of all classifiers within $\epsilon$ with high probability over the draw of the samples.
This is called \emph{uniform convergence} and requires $\Theta(\vcd(\F)/\epsilon^2)$ samples.
More formally, for any $\epsilon, \delta \in (0,1)$, there is
\[ m_{\epsilon, \delta} \in O\left(\frac{1}{\epsilon^2} \Parens{\vcd(\F) + \ln\Big(\frac 1\delta\Big)} \right)
\]
such that with probability $1-\delta$ over the sample set $S\sim \D^{m_{\epsilon, \delta}}$, for all $f\in \F$, $\Card{\err_\D(f) - \err_S(f)}\leq \epsilon$. These sample complexities are known to be nearly tight.
We refer the reader to \citet{AB99} for more details. \footnote{
Informally speaking, the VC dimension of a function class controls the number of samples needed for learning. This is because if the number of samples is much smaller than the VC dimension, it is possible to have two functions that perform identically  on the training set but there is a large gap between their performance on the true distributions. The surprising aspect of these results is that VC dimension also characterizes the number of samples sufficient for learning.
}

It is evident from these results that in the worst case agnostic learning requires significantly, i.e., about a factor of $1/\epsilon$, more data than realizable learning. Unfortunately, such large amount of data may not be available in many applications and domains, e.g., medical imaging. On the other hand, day-to-day applications of machine learning rarely resemble worst-case instances of agnostic learning. In Section~\ref{sec:22:noise}, we show how the sample complexity of agnostic leaning significantly improves when we make additional assumptions on the nature of the noise.

\subsection{Computational Complexity}
\label{sec:comp-complexity:worstcase}

Given a sufficiently large sample set, the computational complexity of classification is concerned with whether one can efficiently compute a classifier of good quality on the samples.
In the realizable setting, this involves computing a classifier $f$ that makes no mistakes on the sample set.
More generally, in the agnostic setting one needs to compute a classifier $f\in \F$ of (approximately) least error. 
This can be done in $\poly(\Card{\F})$ runtime.
However, in most cases $\F$ is infinitely large or, even in cases that it is finite, it is exponential in the natural representation of the problem, e.g., the set of all linear threshold functions, decision trees, boolean functions, etc.
In this section, we focus on a setting where $\F$ is the set of homogeneous linear thresholds,
which is one of the most popular classifiers studied in  machine learning.

Consider the realizable setting where  $f_{\vec w} \in \F$ exists that is consistent with the set  $S$ sampled from $\D$, i.e.,   $y = \sign(\vec w\cdot \vec x)$
for all $(\vec x, y)\in S$. Then  such a vector $\vec w = \frac{\vec v}{\| \vec v\|_2}$ can be computed in time $\poly(d, |S|)$ by finding a solution $\vec v$ to the following linear program with a dummy objective 
\begin{equation}
\begin{array}{ll@{}ll}
\text{minimize}_{\vec v\in \R^d}  & 1 &\\
\text{subject to} & y(\vec v \cdot \vec x) \geq 1,    \qquad \forall (\vec x, y) \in S.
\end{array}
\label{eq:LP}
\end{equation}

Can one use this linear program in the agnostic case? The answer  depends on how much noise, measured by the error of the optimal classifier, exists in the data set. After all, if the noise is so small that it does not appear in the sample set, then one can continue to use the above linear program.
To see this more formally, let $\O_\F$ be an \emph{oracle} for the realizable setting that takes sample set $S$ and returns a classifier $f\in \F$ that is perfect on $S$ if one exists. Note that the above linear program achieves this by returning $\vec w$ that satisfies the constraints or certifying that no such vector exists. In the following algorithm, we apply this oracle that is designed for the realizable setting to the agnostic learning problem  where the noise level is very small. Interestingly, these guarantees go beyond learning linear thresholds and apply to any learning problem that is efficiently solvable in the realizable setting.

\begin{figure}[h]
\hrule\medskip
\textbf{Input}: Sampling access to $\D$, set of classifiers $\F$, oracle $\O_\F$, $\epsilon$, and $\delta$.
\begin{enumerate}

\item Let $m = m^{real}_{\frac{\epsilon}{4}, 0.5}$ and $r = m^2 \ln(2/\delta)$.

\item For $i = 1, \dots, r$, take a sample set $S_i$ of $m$ i.i.d. samples from $\D$. Let $f_i = \O_\F(S_i)$ or $f_i = \text{``none''}$ if there is no perfect classifier for $S_i$. \label{step:call-oracle}

\item Take a fresh sample set $S$ of $m' = \frac{1}{\epsilon} \ln(r/ \delta)$ i.i.d. samples from $\D$.

\item Return $f_i$ with minimum $\err_S(f_i)$.

\end{enumerate}
\caption{Efficient Agnostic Learning for Small Noise.}\label{fig:alg:KL88-agnostic-reduction}
\medskip\hrule
\end{figure}

\begin{theorem}[\cite{KL88}] \label{thm:kearns-li}
Consider an agnostic learning problem with distribution $\D$ and set of classifiers $\F$ such that $\min_{f\in \F} \err_\D(f) \leq c\epsilon / \vcd(\F)$ for a sufficiently small constant $c$. 
Algorithm~\ref{fig:alg:KL88-agnostic-reduction} makes $\poly\Parens{\vcd(\F), \frac 1\epsilon, \ln(\frac 1\delta)}$ calls to the realizable oracle $\O_\F$, and with probability $1-\delta$, returns a classifier $f$ with $\err_\D(f) \leq \epsilon$.
\end{theorem}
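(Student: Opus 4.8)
The plan is to split the analysis along the two phases of the algorithm: first show that Step~\ref{step:call-oracle} produces, with good probability, at least one classifier of small true error, and then show that the fresh set $S$ in Steps~3--4 reliably singles out a good one. Throughout, let $f^{*}\in\F$ attain $\opt=\err_\D(f^{*})\le c\epsilon/\vcd(\F)$, and let $\D'$ be $\D$ conditioned on the event $\{y=f^{*}(\vec x)\}$; by construction $\err_{\D'}(f^{*})=0$, so $\D'$ is realizable by $\F$, and $\D$ equals the mixture $(1-\opt)\D'+\opt\,\D''$, where $\D''$ is $\D$ conditioned on $\{y\ne f^{*}(\vec x)\}$.

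First I would analyze a single round $i$. When $S_i\sim\D^{m}$, the event that every point of $S_i$ satisfies $y=f^{*}(\vec x)$ (call it ``clean'') has probability $(1-\opt)^{m}$, and conditioned on it $S_i$ is distributed exactly as $(\D')^{m}$; moreover on a clean $S_i$ the oracle must output some $f_i\in\F$ consistent with $S_i$, since $f^{*}$ is one such. Applying the realizable sample-complexity bound to the realizable distribution $\D'$ with parameters $(\epsilon/4,1/2)$ --- which is exactly why $m=m^{real}_{\epsilon/4,1/2}$ --- this $f_i$ has $\err_{\D'}(f_i)\le\epsilon/4$ with probability at least $1/2$ (over clean $S_i$), and since $\err_\D(f_i)\le\err_{\D'}(f_i)+\opt$ and $\opt\le\epsilon/4$, such an $f_i$ then has $\err_\D(f_i)\le\epsilon/2$. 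Thus a single round produces a classifier of true error at most $\epsilon/2$ with probability at least $\tfrac12(1-\opt)^{m}$. Here is where the small-noise hypothesis enters: $(1-\opt)^{m}\ge e^{-2m\,\opt}$, and $m\,\opt=O(c\log(1/\epsilon))$ since $m=\tilde O(\vcd(\F)/\epsilon)$, so for $c$ small enough $(1-\opt)^{m}\ge 2/m^{2}$ and a round succeeds with probability $\ge 1/m^{2}$. Over the $r=m^{2}\ln(2/\delta)$ independent rounds, the chance that \emph{no} $f_i$ has true error $\le\epsilon/2$ is at most $(1-1/m^{2})^{r}\le e^{-\ln(2/\delta)}=\delta/2$.

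Next I would handle Steps~3--4. On the event just described, fix some $f_j$ with $\err_\D(f_j)\le\epsilon/2$ and condition on $S_1,\dots,S_r$, so all the $f_i$ are now fixed and the validation set $S$ (of size $m'=\Theta(\tfrac1\epsilon\ln(r/\delta))$) is independent of them. A multiplicative Chernoff bound gives $\err_S(f_j)\le\tfrac34\epsilon$ except with probability $\le\delta/4$; a Chernoff lower-tail bound gives, for each $f_i$ with $\err_\D(f_i)>\epsilon$, that $\err_S(f_i)>\tfrac34\epsilon$ except with probability $\le\delta/(4r)$, so a union bound over these at most $r$ classifiers leaves all of them correctly rejected except with probability $\le\delta/4$. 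On the intersection of these events, the empirical minimizer returned in Step~4 has empirical error $\le\err_S(f_j)\le\tfrac34\epsilon$, hence true error $\le\epsilon$. Combining this $1-\delta/2$ with the $1-\delta/2$ from the single-round analysis yields overall success probability $\ge1-\delta$, and the algorithm makes $r=\poly(\vcd(\F),1/\epsilon,\ln(1/\delta))$ oracle calls because $m=m^{real}_{\epsilon/4,1/2}=\tilde O(\vcd(\F)/\epsilon)$.

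The crux --- and the main obstacle --- is the single-round analysis together with the choice $r=m^{2}\ln(2/\delta)$: one must notice that conditioning on a clean sample turns $\D^{m}$ into a genuinely realizable problem over $\D'$ on which the realizable oracle is usable, then quantify, via the hypothesis $\opt\le c\epsilon/\vcd(\F)$, that the clean-sample probability $(1-\opt)^{m}$ stays above $1/\poly(m)$, so that polynomially many repetitions amplify a round's success to confidence $1-\delta/2$. By comparison Steps~3--4 are a routine ``pick the empirically best of finitely many hypotheses on a fresh sample'' argument; the only care needed there is to use the multiplicative Chernoff bound so that $O(\tfrac1\epsilon\ln(r/\delta))$ --- rather than $O(1/\epsilon^{2})$ --- fresh samples suffice, which works precisely because the benchmark $f_j$ has error bounded away from $\epsilon$.
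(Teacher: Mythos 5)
Your proof is correct and follows essentially the same route as the paper's sketch: a clean sample occurs each round with probability at least $1/m^2$ because $\opt \le c\epsilon/\vcd(\F)$ and $m = \tilde{O}(\vcd(\F)/\epsilon)$, so $r = m^2\ln(2/\delta)$ rounds suffice to find a good hypothesis, which is then selected via a multiplicative Chernoff bound on a fresh sample of size $O(\epsilon^{-1}\ln(r/\delta))$. Your introduction of the conditional realizable distribution $\D'$ and the bound $\err_\D(f_i)\le\err_{\D'}(f_i)+\opt$ is a welcome tightening of a step the paper glosses over (the paper applies the realizable sample-complexity guarantee to $S_i$ directly without noting that conditioning on cleanness changes the sample's distribution), but it is a refinement of the same argument rather than a different one.
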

\begin{proof}[Proof Sketch]
It is not hard to see that Step~\ref{step:call-oracle} of the algorithm returns at least one classifier $f_i \in \F$, which perfectly classifies $S_i$ of size $m = \tilde\Theta(\vcd(\F)/\epsilon)$ with high probability, and therefore has error at most $\epsilon/4$ on $\D$.
Since $m = \Theta( \epsilon^{-1}\vcd(\F)\ln(1/\epsilon))$, for a fixed $i$, the probability that $S_i$ is perfectly labeled by the optimal classifier is  
$\Parens{1- c\epsilon/\vcd(\F)}^m \geq \frac {1}{m^2}.$ Repeating this $r = m^2 \ln(2/\delta)$ times, with probability at least $1 - \frac \delta 2$ at least one sample set is perfectly labeled by the optimal classifier

We use the Chernoff bound to estimate the error within a multiplicative factor of $2$.\footnote{
Here, a multiplicative approximation rather than an additive one needs  fewer samples than is discussed in Section~\ref{sec:22:sample-complexity}.}
With probability $1-\delta$ over the choice of $m'$ samples $S$, any such classifier $f_i$ with $\err_\D(f_i) \leq \epsilon /4$ has  $\err_S(f_i) \leq \epsilon /2$  and any such classifier with $\err_\D(f_i)> \epsilon$ has $\err_S(f_i) >  \epsilon/ 2$. Therefore, Algorithm~\ref{fig:alg:KL88-agnostic-reduction} returns a classifier of error $\epsilon$.
\end{proof}

Theorem~\ref{thm:kearns-li} shows how to efficiently learn a $d$-dimensional linear threshold in the agnostic setting when the noise level is $O\Parens{\frac{\epsilon}{d}}$.
At its heart, Theorem~\ref{thm:kearns-li} relies on the fact that when the noise is small,
Linear Program~\eqref{eq:LP} is still feasible with a reasonable probability.
On the other hand, significant noise in agnostic learning leads to unsatisfiable constraints in Linear Program~\eqref{eq:LP}.
Indeed, in a system of linear equations where one can satisfy $(1-\epsilon)$ fraction of the equations, it is NP-hard to find a solution that satisfies $\Theta(1)$ fraction of them. \citet{GR09} use this to show that even if there is a near-perfect linear threshold $f^*\in \F$ with $\err_\D(f^*) \leq \epsilon$, finding any classifier in $\F$ with error $\leq \frac 12 - \Theta(1)$ is NP-Hard.
Agnostic learning of linear thresholds is hard even if the algorithm is allowed to return a classifier $f\notin \F$.
This setting is called  \emph{improper} learning and is generally simpler than the problem of learning a classifier from $\F$.
But, even in  improper learning, when the optimal linear threshold  has a small constant error $\mathrm{opt}$, it is hard to learn a classifier of error $O(\mathrm{opt})$~\citep{daniely2015hardness}, assuming that refuting random \emph{Constraint Satisfaction Problems} is hard under a certain regime.

These hardness results demonstrate a gap between what is efficiently learnable in the agnostic setting and the realizable setting, even if one has access to unlimited data.
In Sections~\ref{sec:22:marginal} and \ref{sec:22:noise}, we circumvent these hardness results by simple assumptions on the shape of the marginal distribution or the nature of the noise.

\section{Benefits of Assumptions on the Marginal Distribution}
\label{sec:22:marginal}

In this section, we show how additional assumptions on the marginal distribution of $\D$ on instance space $\X$ improve the computational aspects of classification. A commonly used class of distributions in the theory and practice of machine learning is the class of \emph{log-concave} distributions, which includes the Gaussian distribution and uniform distribution over convex sets.
Formally,
\begin{definition}
A distribution $\P$ with density $p$ is log-concave if $\log(p(\cdot))$ is concave. It is isotropic if its mean is at the origin and has a unit co-variance matrix.
\end{definition}
Throughout this section, we assume that the marginal distribution of $\D$ is log-concave and isotropic.
Apart from this
we make no further assumptions and allow for arbitrary label noise.
Let us first state several useful properties of isotropic log-concave distributions. We refer the interested reader to \citep{LV07,ABL17,BL13} for the proof of these properties.

\begin{theorem} \label{thm:log:prop}
Let $\P$ be an isotropic log-concave distribution over $\X = \R^d$.
\begin{enumerate}
\item All marginals of $\P$ are also isotropic log-concave distributions.
\label{eq:log:marginal}

\item For any $r$, $\Pr[\| \vec x\| \geq r \sqrt{d}] \leq \exp(-r+1)$.
\label{eq:log:tail}

\item There are constants $\overline{C}_1$ and $\underline{C}_1$, such that for any two unit vectors $\vec w$ and $\vec w'$, $
\underline{C}_1 \theta(\vec w, \vec w') \leq \Pr_{\vec x\sim \P}[\sign(\vec w\cdot \vec x) \neq  \sign(\vec w'\cdot \vec x)]  \leq \overline{C}_1 \theta(\vec w, \vec w'),$ where $\theta(\vec w, \vec w')$ is the angle between vectors $\vec w$ and $\vec w'$.
\label{eq:log:angle}

\item There are constants $\overline{C}_2$ and $\underline{C}_2$, such that for any unit vector $\vec w$ and $\gamma$, 
$
\underline{C}_2 \gamma \leq \Pr_{\vec x \sim \P}\left[ \Card{\vec x \cdot \vec w} \leq \gamma \right] \leq \overline{C}_2 \gamma.$
\label{eq:log:band}

\item For any constant $C_3$ there is a constant $C'_3$ such that for any two unit vectors $\vec w$ and $\vec w'$ such that $\theta(\vec w, \vec w')\leq \alpha < \pi/2$, we have
$
\Pr_{\vec x \sim \P} [\Card{\vec x \cdot \vec w} \geq  C'_3 \alpha
 \text{ and } \sign(\vec w\cdot \vec x) \neq \sign (\vec w' \cdot \vec x) ] \leq \alpha C_3.
$
\label{eq:log:wedge}
 
\end{enumerate}
\end{theorem}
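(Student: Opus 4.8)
\emph{Proof proposal.}
The five items split naturally into one structural fact (item~\ref{eq:log:marginal}), one genuine concentration estimate (item~\ref{eq:log:tail}), and three geometric estimates (items~\ref{eq:log:angle}--\ref{eq:log:wedge}) that I would reduce, via item~\ref{eq:log:marginal}, to a one- or two-dimensional computation. \textbf{Item~\ref{eq:log:marginal}} is the foundation. Log-concavity of a marginal is Prékopa's theorem (a consequence of the Prékopa--Leindler inequality): integrating a log-concave density over a coordinate subspace yields a log-concave density. Isotropy is linear algebra: if $\vec x$ has mean $\vec 0$ and covariance $I$ and $\Pi$ is the orthogonal projection onto a subspace $V$, then $\EE[\Pi\vec x]=\Pi\EE[\vec x]=\vec 0$ and $\mathrm{Cov}(\Pi\vec x)=\Pi I\Pi^\top=I_V$, so $\Pi\vec x$ is isotropic. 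Having this, I would next record the standard \emph{pointwise} bounds for an isotropic log-concave density $q$ on $\R^k$ with $k\in\{1,2\}$: there are \emph{universal} constants $c,C>0$ with (i) $q(\vec z)\le C$ everywhere, (ii) $q(\vec z)\ge c$ for $\|\vec z\|\le c$, and (iii) $q(\vec z)\le C e^{-c\|\vec z\|}$ everywhere. Items (i)--(ii) follow from log-concavity together with $\int q=1$ and the moment identity $\EE[\|\vec x\|^2]=k$ (a log-concave density can be neither very tall on a small set nor very small at the centroid without violating one of these), and (iii) follows from (i),(ii) and log-concavity. The restriction $k\le 2$ is exactly what makes these constants universal, and it suffices because items~\ref{eq:log:angle}--\ref{eq:log:wedge} only see the behaviour of $\vec x$ along $\mathrm{span}(\vec w,\vec w')$.

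\textbf{Item~\ref{eq:log:tail}} is the one non-elementary ingredient: the Euclidean norm of an isotropic log-concave vector concentrates around $\sqrt d$ with an exponential tail. I would invoke the Lovász--Vempala estimate directly; its proof routes through the one-dimensional exponential tail of log-concave functions and a radial integration argument. If one insists on a self-contained treatment, this is the main obstacle --- everything else below is bookkeeping on top of item~\ref{eq:log:marginal} and the pointwise bounds (i)--(iii).

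For \textbf{item~\ref{eq:log:band}}, $Z=\vec x\cdot\vec w$ is one-dimensional isotropic log-concave by item~\ref{eq:log:marginal}, so $\Pr[|Z|\le\gamma]=\int_{-\gamma}^{\gamma}q_Z\le 2C\gamma$ by (i), giving $\overline{C}_2$, and for $\gamma\le c$ we get $\Pr[|Z|\le\gamma]\ge 2c\gamma$ by (ii), giving $\underline{C}_2$ (for $\gamma$ above a universal constant the lower bound is immediate from a fixed lower bound on $\Pr[|Z|\le\text{const}]$). For \textbf{item~\ref{eq:log:angle}}, the event $\{\sign(\vec w\cdot\vec x)\ne\sign(\vec w'\cdot\vec x)\}$ depends only on the marginal of $\vec x$ on $\mathrm{span}(\vec w,\vec w')$, which by item~\ref{eq:log:marginal} is isotropic log-concave on $\R^2$. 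In polar coordinates that event is a union of two opposite angular sectors of total angular measure $2\,\theta(\vec w,\vec w')$, so its probability equals $\int_{\text{sectors}}\big(\int_0^\infty q(r,\phi)\,r\,dr\big)\,d\phi$; by (ii),(iii) the inner radial integral lies in a universal interval $[\,c^3/2,\;C/c^2\,]$, hence the whole probability lies in $[\underline{C}_1\theta,\overline{C}_1\theta]$.

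For \textbf{item~\ref{eq:log:wedge}} I again pass to the plane $\mathrm{span}(\vec w,\vec w')$ and write $\vec w=\vec e_1$, $\vec w'=(\cos\theta,\sin\theta)$ with $\theta=\theta(\vec w,\vec w')\le\alpha$. The disagreement region is a double sector of angular width $\le 2\theta\le 2\alpha$ with one edge on the $x_2$-axis, so every point $\vec x$ in it makes angle at most $\theta$ with the $x_2$-axis, whence $|\vec x\cdot\vec w|=|x_1|\le\|\vec x\|\sin\theta\le\alpha\|\vec x\|$. Consequently the event in item~\ref{eq:log:wedge} forces $\|\vec x\|\ge |x_1|/\alpha\ge C_3'$, and its probability is at most
\[
\int_{\text{sectors of width }\le 2\alpha}\ \int_{C_3'}^{\infty} q(r,\phi)\,r\,dr\,d\phi\ \le\ 2\alpha\int_{C_3'}^{\infty} C e^{-c r}\,r\,dr\ \le\ \alpha\, C_3 ,
\]
using (iii) in the middle step and choosing $C_3'$ large enough (depending only on $c,C$ and the target $C_3$) to make $2\int_{C_3'}^{\infty}Ce^{-cr}r\,dr\le C_3$ in the last step. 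This is exactly the claimed inequality, which completes the plan.
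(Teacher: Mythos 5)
The paper does not actually prove Theorem~\ref{thm:log:prop}; it cites \citet{LV07}, \citet{ABL17}, and \citet{BL13} for the proofs and gives only a one-sentence heuristic, namely that Part~\ref{eq:log:marginal} lets one reduce the remaining parts to analyzing projections of $\vec x$ onto $\mathrm{span}(\vec w,\vec w')$. Your proposal carries out exactly that program --- Pr\'ekopa for Part~\ref{eq:log:marginal}, Lov\'asz--Vempala norm concentration for Part~\ref{eq:log:tail}, and universal pointwise density bounds in dimension at most two feeding into direct polar-coordinate estimates for Parts~\ref{eq:log:angle}--\ref{eq:log:wedge} --- and the geometry and estimates all check out, so you have effectively supplied the proof the chapter delegates to its references. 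One small inaccuracy to note: in Part~\ref{eq:log:band} you assert the lower bound is ``immediate'' once $\gamma$ exceeds a universal constant, but $\underline{C}_2\gamma$ eventually exceeds $1$ while the probability does not, so no fixed $\underline{C}_2$ can work for all $\gamma$; this is an imprecision in the theorem statement itself (which is implicitly restricted to $\gamma$ in a bounded range, as is how it is used in the chapter), and your density-lower-bound argument correctly handles that range.
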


Part~\ref{eq:log:marginal} of Theorem~\ref{thm:log:prop} is useful in establishing the other properties of log-concave distributions.
For example,  projection of $\vec x$ on any orthonormal subspace is equivalent to the marginal distribution over the coordinates of the new subspace  and thus forms an isotropic log-concave distribution.
This allows one to prove the rest of Theorem~\ref{thm:log:prop} by analyzing the projections of $\vec x$ on the relevant unit vectors $\vec w$ and $\vec w'$.
Part~\ref{eq:log:marginal} and the exponential tail property of log-concave distributions (as expressed in Part~\ref{eq:log:tail}) are used in Section~\ref{sec:22:poly-reg} to show that linear thresholds can be approximated using low degree polynomials over log-concave distributions.

Part~\ref{eq:log:angle} allows one to bound the error of a candidate classifier in terms of its angle to the optimal classifier.
In addition, the exponential tail of log-concave distributions implies that a large fraction of the distribution is in a band  around the decision boundary of a classifier  (Part~\ref{eq:log:band}). 
Furthermore, the exponential tail property --- when applied to regions that are progressively farther from the origin and are captured within the disagreement region ---  implies that  only a small part of the disagreement between a candidate classifier and the optimal classifier falls outside of this band (Part~\ref{eq:log:wedge}).
Sections~\ref{sec:ABL} and \ref{sec:22:massart} use these properties to localize the learning problem near a decision boundary of a candidate classifier and achieve strong computational results for learning linear thresholds.

\subsection{Computational Improvements via Polynomial Regression}
\label{sec:22:poly-reg}

One of the reasons behind the computational hardness of agnostic learning is that it involves a non-convex and non-smooth function, $\sign(\cdot)$. 
Furthermore, $\sign(\cdot)$ cannot be approximated \emph{uniformly over all $\vec x$} by low degree polynomials or other convex and smooth functions that can be efficiently optimized. 
However, one only needs to approximate $\sign(\cdot)$ in \emph{expectation} over the data-generating distribution.
This is especially useful when the marginal distribution has exponential tail, e.g., log-concave distributions, because it allows one to focus on approximating $\sign(\cdot)$ close to its decision boundary, at the expense of poorer approximations far from the boundary, but without reducing the (expected) approximation factor overall.

This is the idea behind the work of \citet{KKMS08} that showed that if $\D$ has a log-concave marginal, then one can learn a classifier of error $\opt + \epsilon$ in time $\poly\left(d^{\kappa(1/\epsilon)}\right)$, for a fixed function $\kappa$.
Importantly, this result establishes that a low-degree polynomial threshold approximates $\sign(\cdot)$ in expectation.
We state this claim below and refer the interested reader to \citet{KKMS08} for its proof.
\begin{theorem}[\citet{KKMS08}] \label{thm:approx-sign}
There is a function $\kappa$ such that for any log-concave (not necessarily isotropic) distribution $\P$ on $\R$, for any $\epsilon$ and $\theta$, there is a polynomial $q:\R\rightarrow\R$ of degree $\kappa(1/\epsilon)$
such that 
$\EE_{z\sim\P}\Bracks{| q(z) - \sign(z-\theta)|} \leq \epsilon.
$
\end{theorem}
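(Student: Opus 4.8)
The plan is to reduce the statement, through affine changes of variable, to a single approximation problem against one \emph{universal} weight on $\R$, and then to extract the polynomial from an abstract denseness fact, using a compactness argument to make its degree depend on $\epsilon$ alone. First I would use that an invertible affine map $z\mapsto az+b$ preserves log-concavity, and that pulling a polynomial back through such a map does not change its degree; so I may apply one to put $\P$ into isotropic position, which merely replaces $\theta$ by some other (still arbitrary) real number. For an isotropic log-concave density $p$ on $\R$ there is a universal bound $\|p\|_\infty\le c_0$ and a universal exponential tail, $\Pr_{z\sim\P}[|z|\ge t]\le e^{1-t}$ (the one-dimensional instances of the properties collected in Theorem~\ref{thm:log:prop}, Parts~\ref{eq:log:band} and~\ref{eq:log:tail}, and the references there); since a log-concave density is unimodal and its mode lies within a universal distance of its mean, combining these gives a universal constant $A_0$ with $p(z)\le A_0 e^{-|z|}$ for \emph{every} $z\in\R$. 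Hence, writing $d\mu(z)=e^{-|z|}\,dz$, every polynomial $q$ satisfies $\EE_{z\sim\P}\Bracks{|q(z)-\sign(z-\theta)|}\le A_0\int_\R |q(z)-\sign(z-\theta)|\,d\mu(z)$, so it suffices to $(\epsilon/A_0)$-approximate $\sign(\cdot-\theta)$ in $L_1(\mu)$ by a polynomial whose degree is bounded by a function of $\epsilon$ only; pulling that polynomial back through the affine map then finishes the argument.

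For the approximation itself I would first dispose of the case $|\theta|>T:=\ln(4A_0/\epsilon)$, where the constant polynomial $q\equiv-\sign(\theta)$ already works since $\int_\R|q-\sign(\cdot-\theta)|\,d\mu\le 2\mu(\{|z|\ge T\})=\epsilon/A_0$. For $|\theta|\le T$ I would invoke that $\mu$ has finite exponential moments (equivalently, its moments obey Carleman's condition), so the polynomials are dense in $L_2(\mu)$ and hence---$\mu$ being a finite measure---in $L_1(\mu)$; in particular every bounded function, so each $\sign(\cdot-\theta)$, is an $L_1(\mu)$-limit of polynomials, which settles existence for each fixed $\theta$. To make the degree uniform in $\theta$, I would use that $\theta\mapsto\sign(\cdot-\theta)$ is $2$-Lipschitz from $[-T,T]$ into $L_1(\mu)$, so its image $K$ is compact there; covering $K$ by finitely many balls of radius $\epsilon/(2A_0)$ about centers $g_1,\dots,g_N$, choosing for each $g_i$ a polynomial of some degree $D_i$ within $\epsilon/(2A_0)$ of it, and setting $D=\max_i D_i$, I obtain that every element of $K$---in particular $\sign(\cdot-\theta)$---is within $\epsilon/A_0$ in $L_1(\mu)$ of a polynomial of degree at most $D$. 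Since $T$, and therefore $D$, depends only on $\epsilon$, this $D$ is the sought $\kappa(1/\epsilon)$.

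The step I expect to be the real obstacle is exactly the one that forces this abstract route instead of an explicit construction. The tempting approach---take a Chebyshev-type polynomial approximating $\sign$ on a bounded interval $[-T',T']$ away from a small band around the jump, then rescale that interval to all of $\R$---cannot be made to close: $T'$ must be $\Omega(\log(1/\epsilon))$ so that $\P$ keeps all but an $\epsilon$-fraction of its mass inside it, yet the rescaled polynomial grows like $(2|z|/T')^{d}$ outside $[-T',T']$, where $d$ is its degree, so taming its contribution against the exponential tail forces $T'=\Omega(d)$, while approximating the jump across an $\epsilon$-mass band inside an interval of length $T'$ forces $d=\Omega(T'/\epsilon)$---an impossible pairing. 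The argument above avoids this by never truncating or rescaling the approximant, working with the global weighted-$L_1$ problem instead; the price is that $\kappa$ comes out unspecified. Turning this into the explicit, polynomially bounded $\kappa(1/\epsilon)$ demanded by the running-time claim preceding the theorem is the genuinely technical part, and is what the more careful approximation-theoretic construction of \citet{KKMS08} supplies.
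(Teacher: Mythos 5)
The paper does not prove this theorem; it explicitly states the claim and defers to \citet{KKMS08} for the proof, so there is no in-paper argument to compare against. Evaluated on its own merits, your argument is correct and does establish the literal statement: the affine reduction to isotropic position is valid; the pointwise bound $p(z)\le A_0 e^{-|z|}$ does follow from the universal sup bound, the tail bound (Part~\ref{eq:log:tail} of Theorem~\ref{thm:log:prop}), and unimodality with mode near the mean (for $|z|$ past a universal constant, $p$ is monotone on a unit interval ending at $z$, so $p(z)$ is bounded by the tail mass beyond $z-1$; the bounded range is controlled by $\|p\|_\infty$); the split at $|\theta|\le T=\ln(4A_0/\epsilon)$ is fine; Carleman's condition gives $L_1(e^{-|z|}\,dz)$-density of polynomials; and the $2$-Lipschitz-plus-compactness argument correctly yields a degree bound uniform over $\theta\in[-T,T]$, hence a well-defined $\kappa$. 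Because the theorem as stated, and its use in the surrounding text, only require $\kappa$ to be some finite-valued function, this suffices here.

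But it is a fundamentally different and weaker route than the one in \citet{KKMS08}. You share the opening move (dominate any isotropic log-concave density by a universal exponential weight, reducing to one weighted-$L_1$ problem), but from there they construct an explicit approximating polynomial and track its degree, which comes out polynomially bounded in $1/\epsilon$; you invoke abstract denseness plus a finite cover, which yields no information about how $\kappa$ grows. Your closing heuristic also overstates the obstruction to their route: a Chebyshev-type polynomial, tempered against the exponential weight, does balance the in-window approximation error against the out-of-window weighted growth, but the balance forces degree $\poly(1/\epsilon)$ rather than the $O(1/\epsilon)$ your back-of-envelope sketch implicitly targets --- the calculation looks contradictory because it aims too low, not because the construction fails. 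The explicit, polynomially bounded $\kappa$ is exactly the quantitative content you correctly flag as ``the genuinely technical part''; it is the substance of the cited result, and the abstract existence argument, while sound, does not recover it.
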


This result suggests a learning algorithm that fits a polynomial, in $L_1$ distance, to the set of samples observed from $\D$.
This algorithm is formally presented in Algorithm~\ref{fig:alg:KKMS}.
At a high level, for a set $S$ of labeled instances, our goal is to compute a polynomial $p:\R^d \rightarrow \R$ of degree $\kappa(1/\epsilon)$ that minimizes $\EE_{(\vec x ,y)\sim S}\left[ |p(\vec x) - y|\right]$. This can be done in time $\poly(d^{\kappa(1/\epsilon)})$, by expanding each $d$-dimensional instance $\vec x$ to a $\poly(d^{\kappa(1/\epsilon})$-dimensional instance $\vec x'$ that includes all monomials with degree at most $\kappa(1/\epsilon)$ of $\vec x$. We can then perform an $L_1$ regression on $(\vec x',y)$s to find $p$ in time $\poly(d^{\kappa(1/\epsilon)})$ --- for example by using a linear program.
With $p$ in hand, we then choose a threshold $\theta$ so as to minimize the empirical error of $\sign\left(p(\vec x) - \theta\right)$. 
This allows us to use Theorem~\ref{thm:approx-sign} to prove the following result.

\begin{figure}[t]
\hrule\medskip
\textbf{Input}: Set $S$ of $\poly\Parens{\frac 1\epsilon d^{\kappa(1/\epsilon)}}$ samples from $\D$
\begin{enumerate}
\item Find polynomial $p$ of degree $\kappa(1/\epsilon)$ that minimizes $\EE_{(\vec x, y)\sim S}\Bracks{|p(\vec x) - y|}.$

\item Let $f(\vec x) = \sign\left(p(\vec x) - \theta\right)$ for $\theta\in[-1,1]$ that minimizes the empirical error on $S$.
\end{enumerate}
\caption{$L_1$ Polynomial Regresssion.}\label{fig:alg:KKMS}
\medskip\hrule
\end{figure}

\begin{theorem}[\citet{KKMS08}]
\label{thm:KKMS:exp}
The $L_1$ Polynomial Regression algorithm (Algorithm~\ref{fig:alg:KKMS})  achieves  $\err_\D(f) \leq \opt+ \epsilon/2$ in expectation over the choice of $S$.
\end{theorem}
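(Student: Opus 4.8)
The plan is to combine the univariate polynomial approximation of Theorem~\ref{thm:approx-sign} with a randomized-threshold argument that turns an $L_1$ approximation of the labels into a $0/1$ classification guarantee, and then to charge everything to the empirical minimizer $\hat p$ via uniform convergence. Let $f^* = h_{\vec w^*}$ be an optimal classifier in $\F$, so $\err_\D(f^*) = \opt$. First I would exhibit a single good competitor polynomial. Since the marginal of $\vec w^*\cdot\vec x$ under $\D$ is a one-dimensional log-concave distribution (Part~\ref{eq:log:marginal} of Theorem~\ref{thm:log:prop}), Theorem~\ref{thm:approx-sign} applied with $\theta=0$ and a target $L_1$ error that is a small constant fraction $\epsilon_0$ of $\epsilon$ (the constant absorbed into $\kappa$) gives a univariate polynomial $q$ of degree $\kappa(1/\epsilon)$ with $\EE\bracks{\Card{q(\vec w^*\cdot\vec x) - \sign(\vec w^*\cdot\vec x)}}\le \epsilon_0$. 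Then $P(\vec x) \coloneqq q(\vec w^*\cdot\vec x)$ is a degree-$\kappa(1/\epsilon)$ polynomial on $\R^d$, and since $\Card{f^*(\vec x)-y} = 2\cdot\mathbf 1_{\left(f^*(\vec x)\neq y\right)}$, the triangle inequality yields
\[ \EE_{(\vec x,y)\sim\D}\bracks{\Card{P(\vec x)-y}} \;\le\; \EE\bracks{\Card{P(\vec x)-f^*(\vec x)}} + 2\opt \;\le\; 2\opt + \epsilon_0 . \]

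Second, I would transfer this bound to the empirical minimizer $\hat p$ returned by Step~1. Because $P$ is a feasible degree-$\kappa(1/\epsilon)$ polynomial, $\EE_{(\vec x,y)\sim S}\bracks{\Card{\hat p(\vec x)-y}} \le \EE_{(\vec x,y)\sim S}\bracks{\Card{P(\vec x)-y}}$, whose expectation over $S$ is exactly $\EE_\D\bracks{\Card{P(\vec x)-y}}\le 2\opt+\epsilon_0$. To pass from empirical to true $L_1$ error of $\hat p$, I would replace $\hat p$ by its clipping $g\coloneqq\max(-1,\min(1,\hat p))$: since labels lie in $[-1,1]$, clipping only decreases the pointwise distance to $y$, and since Step~2 uses thresholds $\theta\in[-1,1]$ we have $\sign(\hat p(\vec x)-\theta) = \sign(g(\vec x)-\theta)$, so nothing is lost in the final classifier. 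Now $g$ ranges over a class of $[-1,1]$-valued functions indexed by $\poly(d^{\kappa(1/\epsilon)})$ monomial coefficients, so a standard pseudo-dimension/Rademacher uniform-convergence bound over this bounded-range loss class, together with the sample size $\Card S = \poly\parens{\tfrac1\epsilon\, d^{\kappa(1/\epsilon)}}$, gives $\EE_S\EE_\D\bracks{\Card{g(\vec x)-y}} \le \EE_S\EE_{(\vec x,y)\sim S}\bracks{\Card{g(\vec x)-y}} + \epsilon_1 \le 2\opt + \epsilon_0 + \epsilon_1$.

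Third comes the conversion from $L_1$ loss to classification error. For any fixed $[-1,1]$-valued $g$ and a threshold $\theta$ drawn uniformly from $[-1,1]$, a one-line computation shows $\Pr_\theta\bracks{\sign(g(\vec x)-\theta)\neq y \mid \vec x,y} = \tfrac12\Card{g(\vec x)-y}$, hence $\EE_\theta\bracks{\err_\D(\sign(g-\theta))} = \tfrac12\EE_\D\bracks{\Card{g(\vec x)-y}}$; in particular $\min_\theta \err_\D(\sign(g-\theta)) \le \tfrac12\EE_\D\bracks{\Card{g(\vec x)-y}}$, so in expectation over $S$ the best threshold achieves error at most $\opt + \tfrac{\epsilon_0+\epsilon_1}{2}$. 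Since Step~2 picks the empirically optimal $\theta$ and the returned $f$ lies in the class $\Braces{\vec x\mapsto\sign(p(\vec x)-\theta)}$ of VC dimension $O(d^{\kappa(1/\epsilon)})$, one further application of uniform convergence (absorbed into the same sample budget, at cost $\epsilon_2$) gives $\EE_S[\err_\D(f)] \le \opt + \tfrac{\epsilon_0+\epsilon_1}{2} + \epsilon_2$. Finally I would fix $\epsilon_0,\epsilon_1,\epsilon_2$ (and the constant inside $\kappa$ and the sample size) so that the total additive slack over $\opt$ is at most $\epsilon/2$.

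I expect the main obstacle to be the generalization step: polynomials of bounded degree but unbounded coefficients are not uniformly bounded, so one must argue that clipping to $[-1,1]$ is simultaneously harmless (labels and the eventual thresholds live in $[-1,1]$, so the final classifier is unaffected) and essential for obtaining a finite-capacity, bounded-range loss class on which uniform convergence holds at the rate $\sqrt{d^{\kappa(1/\epsilon)}/\Card S}$; making this rate match the claimed $\poly\parens{\tfrac1\epsilon\, d^{\kappa(1/\epsilon)}}$ sample size is the only genuinely technical bookkeeping. By contrast, the existence of the competitor $P$ (immediate from Theorems~\ref{thm:log:prop} and~\ref{thm:approx-sign}) and the randomized-threshold identity are short, and the whole argument is carried out in expectation over $S$, so no high-probability or union-bound machinery is needed.
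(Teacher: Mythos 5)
Your proof is correct, but it reorders the steps in a way that creates a technical obstacle the paper avoids entirely. The paper applies the randomized-threshold identity to the \emph{empirical} distribution: for every fixed $S$ it gets $\err_S(f) \le \tfrac12 \EE_S[|y - p(\vec x)|] \le \tfrac12 \EE_S[|y - p^*(\vec x)|]$ by empirical optimality of $p$, then takes expectation over $S$, at which point the fixed competitor $p^*$ makes $\EE_S[\EE_S[|y - p^*(\vec x)|]] = \EE_\D[|y - p^*(\vec x)|] \le 2\opt + \epsilon/2$ with no uniform convergence at all. The only uniform-convergence step in the paper is the final one, for the $0/1$ loss of polynomial thresholds (bounded, finite VC), turning $\EE_S[\err_S(f)]$ into $\EE_S[\err_\D(f)]$. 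You instead apply the randomized-threshold identity on $\D$, which forces you to first control the \emph{population} $L_1$ error of the learned polynomial; since degree-$\kappa$ polynomials with unbounded coefficients give an unbounded loss class, you must introduce the clipping $g = \max(-1,\min(1,\hat p))$ and a pseudo-dimension/Rademacher argument for the real-valued $[0,2]$-range loss, and then you need a \emph{second} uniform-convergence step for the empirically chosen threshold. All of that is sound (the clipping is harmless for the final classifier because $\sign(\hat p - \theta) = \sign(g - \theta)$ for $\theta\in(-1,1)$, modulo boundary conventions at $\theta=\pm1$), but notice that the obstacle you flagged as ``the main technical bookkeeping'' is self-inflicted: by charging $p$ to $p^*$ on $S$ before ever leaving the empirical world, the paper never needs to generalize a real-valued loss, and its remark immediately after the theorem (``$|p^*(\vec x)-h^*(\vec x)|$ is small in expectation but unbounded in the worst-case'') is precisely the reason it chooses that ordering.
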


\begin{proof}
Let $f(\vec x) = \sign( p(\vec x) - \theta)$ be the outcome of Algorithm~\ref{fig:alg:KKMS} on sample set $S$. It is not hard to see that the empirical error of $f$ is at most half of the $L_1$ error of $p$, i.e., 
$\err_S(f)\leq \frac 12 \EE_S\left[|y-p(\vec x)|\right]$,
where $\EE_S$ denotes expectation taken with respect to the empirical distribution $(x,y)\sim S$.
To see this, note that $f(\vec x)$ is wrong only if $\theta$ falls between $p(\vec x)$ and $y$.
If we were to pick $\theta$ uniformly at random from $[-1,1]$, then in expectation $f(\vec x)$ is wrong with probability $|p(\vec x) - y|/2$. But $\theta$ is specifically picked by the algorithm to minimize $\err_S(f)$,  therefore, it beats the expectation and also achieves 
$\err_S(f) \leq \frac 12 \EE_S\left[ |y - p(\vec x)| \right].
$

Next, we use Theorem~\ref{thm:approx-sign} to show that there is a polynomial $p^*$ that approximates the optimal classifier in expectation over a log-concave distribution.
Let $h^* = \sign(\vec w^*\cdot \vec x)$ be the optimal linear threshold for distribution $\D$. Note that $\vec w^*\cdot \vec x$ is a one-dimensional isotropic log-concave  distribution (By Theorem~\ref{thm:log:prop} part~\ref{eq:log:marginal}).
Let $p^*(\vec x) = q(\vec x\cdot \vec w^*)$ be a polynomial of degree $\kappa(2/\epsilon)$ that approximates $h^*$ according to Theorem~\ref{thm:approx-sign}, i.e., $\EE_\D[|p^*(\vec x) - h^*(\vec x))|]\leq \epsilon/2$.
Then, we have
\[
\err_S(f)  \leq  \frac12 \EE_{S}[|y - p(\vec x)|] 
\leq  \frac12 \EE_{S}[|y - p^*(\vec x)|] 
\leq  \frac12 \left( \EE_{S}[|y-h^*(\vec x)|] + \EE_{S}[|p^*(\vec x)-h^*(\vec x)|]\right).
\]
Consider the expected value of the final expression $\frac12( \EE_{S}[|y-h^*(\vec x)|] + \EE_{S}[|p^*(\vec x)-h^*(\vec x)|])$ over the draw of a set $S$ of $m$ samples from $\D$.
Since $|y-h^*(\vec x)|=2$ whenever $y \neq h^*(\vec x)$, we have $\EE_{S \sim \D^m}[\frac12 \EE_S[|y-h^*(\vec x)|]]=\opt$.
Moreover, $\EE_{S \sim \D^m} [\frac12 \EE_{S}[|p^*(\vec x)-h^*(\vec x)|]] \leq \epsilon/4$ since $\EE_{\vec x \sim \D}[|p^*(\vec x)-h^*(\vec x)|]\leq \epsilon/2$.  So, the expected value of the final expression is at most $\opt+\epsilon/4$. 
The expected value of the initial expression $\err_S(f)$ is the expected empirical error of the hypothesis produced. Using the fact that the VC dimension of the class of degree-$\kappa(1/\epsilon)$ polynomial thresholds is $O\left(d^{\kappa(1/\epsilon)}\right)$
and Algorithm~\ref{fig:alg:KKMS} uses $m = \poly\left(\epsilon^{-1} d^{\kappa(1/\epsilon)} \right)$ samples,
the expected empirical error of $h$ is within $\epsilon/4$ of its expected true error, proving the theorem.
\end{proof}

Note that Theorem~\ref{thm:KKMS:exp} bounds the error in expectation over $S\sim \D^m$,  rather than with high probability. This is because  
 $|p^*(\vec x)-h^*(\vec x)|$ is small in expectation but unbounded in the worst-case.
However, this is sufficient to show that a single run of the Algorithm~\ref{fig:alg:KKMS} has $\err_\D(f) \leq \opt+\epsilon$ with probability $\Omega(\epsilon)$. To achieve a high-probability bound, we run this algorithm $O(\frac{1}{\epsilon} \log \frac{1}{\delta})$ times and evaluate the outcomes on a separate sample of size $\tilde O(\frac{1}{\epsilon^2}\log \frac{1}{\delta})$. This is formalized  below.

\begin{corollary}
\label{thm:kkms}
For any $\epsilon$ and $\delta$, 
repeat Algorithm~\ref{fig:alg:KKMS} on $O\left(\epsilon^{-1}\ln(1/\delta)\right)$ independently generated sample sets $S_i$ to learn $f_i$. Take an additional $\tilde{O}\left( \epsilon^{-2} \ln(1/\delta)\right)$ samples from $\D$ and return $f^*_i$ that minimizes the error on this sample set. With probability $1-\delta$, this classifier has $\err_\D(f^*_i)\leq \opt + \epsilon$.
\end{corollary}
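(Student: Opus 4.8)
The plan is to upgrade the in-expectation guarantee of Theorem~\ref{thm:KKMS:exp} to a high-probability one by the standard "confidence boosting'' recipe: show a single run succeeds with probability $\Omega(\epsilon)$, repeat $O(\epsilon^{-1}\ln(1/\delta))$ times so that \emph{some} run succeeds with probability $1-\delta/2$, and then use the fresh sample of size $\tilde O(\epsilon^{-2}\ln(1/\delta))$ to reliably pick out a good run. I would split the error budget as $\epsilon = \tfrac{\epsilon}{2}$ (slack already present in Theorem~\ref{thm:KKMS:exp}) $+\ \tfrac{\epsilon}{4}$ (Markov slack) $+\ \tfrac{\epsilon}{4}$ (two $\tfrac{\epsilon}{8}$ estimation errors in the selection step).

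First I would analyze a single run of Algorithm~\ref{fig:alg:KKMS} with accuracy parameter $\epsilon$, producing $f$. Theorem~\ref{thm:KKMS:exp} gives $\EE_{S}\bracks{\err_\D(f)} \le \opt + \tfrac{\epsilon}{2}$, and since $\err_\D(f)$ is a nonnegative random variable, Markov's inequality yields $\Pr\bracks{\err_\D(f) > \opt + \tfrac{3\epsilon}{4}} \le \tfrac{\opt + \epsilon/2}{\opt + 3\epsilon/4} = 1 - \tfrac{\epsilon/4}{\opt + 3\epsilon/4} \le 1 - \tfrac{\epsilon}{8}$, using $\opt \le 1$. So a single run has $\err_\D(f) \le \opt + \tfrac{3\epsilon}{4}$ with probability at least $\epsilon/8$. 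Running on $N = O\Parens{\epsilon^{-1}\ln(1/\delta)}$ independent sample sets $S_1,\dots,S_N$ to obtain $f_1,\dots,f_N$, independence gives $\Pr\bracks{\text{no }f_i\text{ has error }\le \opt + \tfrac{3\epsilon}{4}} \le (1-\tfrac{\epsilon}{8})^N \le e^{-\epsilon N/8} \le \delta/2$ for a suitable constant in $N$; hence with probability $\ge 1-\delta/2$ some $f_{i^\star}$ satisfies $\err_\D(f_{i^\star}) \le \opt + \tfrac{3\epsilon}{4}$.

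Next I would handle the selection step by conditioning on $S_1,\dots,S_N$, which fixes the hypotheses $f_1,\dots,f_N$, and drawing an independent sample $S'$ of size $m' = O\Parens{\epsilon^{-2}\ln(N/\delta)} = \tilde O\Parens{\epsilon^{-2}\ln(1/\delta)}$. Because the $f_i$ are already fixed, a Chernoff/Hoeffding bound plus a union bound over only these $N$ hypotheses — not uniform convergence over the whole degree-$\kappa(1/\epsilon)$ polynomial-threshold class — gives $\card{\err_{S'}(f_i)-\err_\D(f_i)} \le \tfrac{\epsilon}{8}$ for all $i$ simultaneously with probability $\ge 1-\delta/2$. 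On the intersection of this event with the one from the previous paragraph (probability $\ge 1-\delta$), the returned $f^\star = \argmin_i \err_{S'}(f_i)$ obeys $\err_\D(f^\star) \le \err_{S'}(f^\star) + \tfrac{\epsilon}{8} \le \err_{S'}(f_{i^\star}) + \tfrac{\epsilon}{8} \le \err_\D(f_{i^\star}) + \tfrac{\epsilon}{4} \le \opt + \epsilon$, as desired.

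The one genuinely delicate point — the main obstacle — is the first step: Theorem~\ref{thm:KKMS:exp} is only an in-expectation statement and cannot be turned into a two-sided concentration bound, since $\card{p^\star(\vec x) - h^\star(\vec x)}$ is small in expectation but unbounded in the worst case. One must therefore exploit one-sidedness (nonnegativity of $\err_\D$) and accept a success probability of merely $\Omega(\epsilon)$ per run, which is exactly what dictates the $O(\epsilon^{-1}\ln(1/\delta))$ repetitions. Everything after that is routine Chernoff and union-bound bookkeeping.
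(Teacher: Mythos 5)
Your proof is correct and follows exactly the route the paper sketches in the paragraph preceding the corollary: apply Markov's inequality to the in-expectation bound of Theorem~\ref{thm:KKMS:exp} (using nonnegativity of $\err_\D(f)$) to get a per-run success probability of $\Omega(\epsilon)$, repeat $O(\epsilon^{-1}\ln(1/\delta))$ times to make at least one run succeed with probability $1-\delta/2$, and then use Hoeffding plus a union bound over just the $N$ fixed candidate hypotheses (not the full polynomial-threshold class) on a fresh $\tilde O(\epsilon^{-2}\ln(1/\delta))$-size validation sample to select a good one. Your observation that one cannot hope for two-sided concentration and must settle for a one-sided Markov bound is precisely the point the paper flags when it notes that $|p^*(\vec x)-h^*(\vec x)|$ is unbounded.
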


An interesting aspect of Algorithm~\ref{fig:alg:KKMS} is that it is \emph{improper}, i.e., it uses a \emph{polynomial threshold} function to learn over the class of linear threshold functions.  
Furthermore, this algorithm runs in polynomial time and learns a classifier of error $O(\mathrm{opt})$ when $\mathrm{opt} = \err_\D(h^*)$ is \emph{an arbitrarily small constant.}
As discussed in Section~\ref{sec:comp-complexity:worstcase}, no computationally efficient algorithm could have obtained this guarantee for general distributions~\citep{daniely2015hardness}. This highlights the need to use structural properties of log-concave distributions for obtaining improved learning guarantees.

While $L_1$ polynomial regression is an extremely powerful algorithm and can obtain error of $\opt+\epsilon$ in $\poly(d^{\kappa(1/\epsilon)})$ time for any value of $\epsilon$, its runtime is polynomial only when $\epsilon$ is a constant.
There is a different simple and efficient algorithm, 
called \emph{Averaging}, that  achieves  non-trivial learning guarantees for any $\epsilon$ that is sufficiently larger than $\opt$.
Averaging~\citep{servedio2001efficient} returns the average of label weighted instances\footnote{Interestingly, Averaging is equivalent to $L_2$ polynomial regression of degree $1$ over the Gaussian distribution.}
\begin{equation*}
\text{classifier }h_{\vec w} \text{ where } \vec w = \frac{\EE_S[y \vec x]}{\|\EE_S[y \vec x]\|}.   \tag{Averaging Algorithm}
\end{equation*}
The idea behind Averaging is simple---if a distribution is realizable and symmetric around the origin (such as a Gaussian) then $\EE[y\vec x]$ points in the direction of the perfect classifier. 
However, 
even a small amount of adversarial noise can create a vector component that is orthogonal to $\vec w^*$.
Nevertheless, \citet{KKMS08} show that Averaging recovers $\vec w^*$ within angle $\epsilon$ when $\opt$ is sufficiently smaller than $\epsilon$.

\begin{theorem}[Based on \citet{KKMS08}]
\label{thm:average}
Consider a distribution $\D$ with a Gaussian and unit variance marginal.
There is a constant $c$ such that for any $\delta$ and $\epsilon > c~\opt \sqrt{\ln(1/\opt)}$,
there is $m\in O\left(\frac{d^2}{\epsilon^2} \ln\big(\frac d\delta\big) \right)$ such that with probability $1-\delta$, the outcome of Averaging on a set of $m$ samples has 
$\err_\D(h_{\vec w}) \leq \opt + \epsilon$.
Furthermore Averaging runs in time $\poly(d, \frac 1\epsilon)$.
\end{theorem}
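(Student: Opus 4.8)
The plan is to show that the expected averaging vector $\bar{\vec v} := \EE_\D[y\vec x]$ points almost exactly along the optimal direction $\vec w^*$, that the noise can tilt it by an angle of only $O(\opt\sqrt{\ln(1/\opt)})$, and that with $m$ samples the empirical vector $\EE_S[y\vec x]$ is within $\ell_2$-distance $O(\epsilon)$ of $\bar{\vec v}$. Combining these with the hypothesis $\epsilon > c\,\opt\sqrt{\ln(1/\opt)}$ bounds the angle $\theta(\vec w,\vec w^*)$ between the output $\vec w$ and $\vec w^*$ by $O(\epsilon)$, and since for a Gaussian marginal the disagreement probability of two halfspaces is proportional to their angle (Theorem~\ref{thm:log:prop}, part~\ref{eq:log:angle}, or, exactly, $\theta(\vec w,\vec w^*)/\pi$), this gives $\err_\D(h_{\vec w}) \le \err_\D(h^*) + \epsilon = \opt+\epsilon$.

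First I would write, for every $(\vec x,y)$, the identity $y\vec x = h^*(\vec x)\vec x - 2\cdot\mathbf{1}_{\{y\neq h^*(\vec x)\}}\,h^*(\vec x)\vec x$, where $h^* = h_{\vec w^*}$ (if $y=h^*(\vec x)$ the correction term vanishes, and if $y\neq h^*(\vec x)$ then $y\vec x = -h^*(\vec x)\vec x$). Taking expectations over $\D$, the first term is $\EE_{\vec x\sim N(0,I_d)}[\sign(\vec w^*\cdot\vec x)\,\vec x]$; by the rotational symmetry of the standard Gaussian its component orthogonal to $\vec w^*$ averages to zero, and its component along $\vec w^*$ equals $\EE[\,|g|\,]\,\vec w^* = \sqrt{2/\pi}\,\vec w^*$ with $g := \vec w^*\cdot\vec x\sim N(0,1)$. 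Hence $\bar{\vec v} = \sqrt{2/\pi}\,\vec w^* - \vec b$, where $\vec b := 2\,\EE_\D[\mathbf{1}_{\{y\neq h^*(\vec x)\}}h^*(\vec x)\vec x]$.

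The heart of the argument is bounding $\|\vec b\|$, and I expect this to be the main obstacle. For any unit vector $\vec u$, $\EE_\D[\mathbf{1}_{\{y\neq h^*\}}h^*(\vec x)(\vec u\cdot\vec x)] \le \EE_\D[\mathbf{1}_{\{y\neq h^*\}}|\vec u\cdot\vec x|] = \EE_g[\bar\eta(g)\,|g|]$, where $g = \vec u\cdot\vec x\sim N(0,1)$, $\bar\eta(g) = \Pr[y\neq h^*(\vec x)\mid \vec u\cdot\vec x = g]\in[0,1]$, and $\EE_g[\bar\eta(g)] = \err_\D(h^*) = \opt$. A one-line rearrangement ("bathtub") argument shows that over all such $\bar\eta$ this is maximized by $\bar\eta = \mathbf{1}_{\{|g|\ge s\}}$ with $\Pr[|g|\ge s] = \opt$, giving value $2\int_s^\infty t\,\phi(t)\,dt = 2\phi(s)$; standard Gaussian tail estimates give $s = O(\sqrt{\ln(1/\opt)})$ and hence $\phi(s) = O(\opt\sqrt{\ln(1/\opt)})$. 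Taking the supremum over $\vec u$ yields $\|\vec b\| = O(\opt\sqrt{\ln(1/\opt)})$. Intuitively, the best an adversary can do with an $\opt$-fraction of mislabels is to flip the points lying farthest out along some direction, and the Gaussian tail caps the resulting drift — this is precisely where the $\sqrt{\ln(1/\opt)}$ loss, and the need for $\epsilon$ to dominate $\opt$, come from.

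For the sampling error, each coordinate $\tfrac1m\sum_i y_i x_{ij}$ is an average of i.i.d. variables bounded in sub-Gaussian norm by $O(1)$ (since $|y_i x_{ij}| = |x_{ij}|$ with $x_{ij}\sim N(0,1)$), so Hoeffding together with a union bound over the $d$ coordinates shows that with $m = O(d^2\epsilon^{-2}\ln(d/\delta))$ samples, $\|\EE_S[y\vec x] - \bar{\vec v}\|_2 \le \epsilon'$ for a suitably small constant multiple $\epsilon'$ of $\epsilon$, with probability $1-\delta$. Putting this together, $\EE_S[y\vec x] = \sqrt{2/\pi}\,\vec w^* - \vec\beta$ with $\|\vec\beta\| \le \|\vec b\| + \epsilon' = O(\opt\sqrt{\ln(1/\opt)} + \epsilon)$; since $\sqrt{2/\pi}$ is a fixed constant far larger than $\|\vec\beta\|$ (using that $\opt,\epsilon$ are small and $\epsilon > c\,\opt\sqrt{\ln(1/\opt)}$), decomposing $\vec\beta$ into its components along and orthogonal to $\vec w^*$ gives $\tan\theta(\vec w,\vec w^*) \le \|\vec\beta\|/(\sqrt{2/\pi}-\|\vec\beta\|) = O(\epsilon)$. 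Choosing the constant $c$ and the constant in $m$ appropriately makes $\theta(\vec w,\vec w^*)$ small enough that $\Pr_{\vec x}[h_{\vec w}(\vec x)\neq h^*(\vec x)]\le\epsilon$, and therefore $\err_\D(h_{\vec w}) \le \err_\D(h^*)+\epsilon = \opt+\epsilon$. Finally, Averaging only forms a sum of $m$ vectors in $\R^d$ and normalizes, taking $O(md) = \poly(d,1/\epsilon)$ time.
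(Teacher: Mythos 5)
The paper does not actually prove Theorem~\ref{thm:average}; it is stated without proof and attributed to \citet{KKMS08} (with the remark that it is a variant, for the Gaussian, of the KKMS result for the uniform distribution on the sphere). Your argument is correct and is essentially the intended one. The decomposition $\EE_\D[y\vec x] = \sqrt{2/\pi}\,\vec w^* - \vec b$, the bathtub/rearrangement bound $\|\vec b\| = O(\opt\sqrt{\ln(1/\opt)})$ coming from the Gaussian tail (the supremum-over-directions $\vec u$ step is sound since the bound on $\EE_g[\bar\eta(g)|g|]$ depends only on $\EE[\bar\eta]=\opt$, not on which $\vec u$ is chosen), the coordinatewise sub-Gaussian concentration of the empirical mean, and the conversion of the angular deviation to excess error via the exact Gaussian disagreement formula $\theta/\pi$ are all the right ingredients, and they fit together as you describe. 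Two cosmetic notes: the appeal to ``Hoeffding'' should strictly be to the sub-Gaussian concentration bound since $y_i x_{ij}$ is unbounded, and your $\ell_\infty$-to-$\ell_2$ argument actually yields $m = O\!\left(\frac{d}{\epsilon^2}\ln\frac{d}{\delta}\right)$, which is stronger than (and therefore consistent with) the $O\!\left(\frac{d^2}{\epsilon^2}\ln\frac{d}{\delta}\right)$ stated in the theorem.
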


This theorem shows that when $\epsilon\in \Omega(\opt \sqrt{\ln(1/\opt)})$ one can efficiently learn a linear threshold of error $\opt + \epsilon$. In the next section, we present a  stronger algorithm  that achieves the same learning guarantee for $\epsilon\in \Omega(\opt)$ 
based on an adaptive localization technique that limits the power of adversarial noise further.

\subsection{Computational Improvements via Localization}
\label{sec:ABL}

One of the challenges we face in designing computationally efficient  learning algorithms is that an algorithm's sensitivity to noise is not the same throughout the distribution.
This is often a byproduct of the fact that easy-to-optimize surrogate loss functions, which approximate the non-convex $\sign$ function, have non-uniform approximation guarantee over the space. 
This poses a challenge since an adversary can corrupt a small fraction of the data in more sensitive regions and degrade the quality of the algorithm's outcome disproportionately.
Identifying and removing these regions typically require knowing the target classifier and therefore cannot be fully done as a pre-processing step.
However, when a reasonably good classifier is known in advance it may be possible to approximately identify these regions and \emph{localize} the problem to learn a better classifier. This is the idea behind the work of \citet{ABL17} which creates an adaptive sequence of carefully designed optimization problems based on localization both on the instance space and the hypothesis space,  i.e., focusing on the data close to the current decision boundary and on classifiers close to the current guess. Localization on instances close to the current decision boundary reduces the impact of adversarial noise on the learning process, while localization on the hypothesis space ensures that the history is not forgotten. Using this, \citet{ABL17} learns a linear separator of error $\opt+ \epsilon$ for $\epsilon \in \Omega(\opt)$ in time $\poly(d, \frac 1\epsilon)$ when $\D$ has an isotropic log-concave marginal.

\paragraph{Key Technical Ideas.}
At its core, localization leverages the fact that a large fraction of the disagreement between a reasonably good classifier and the optimal classifier is close to the decision boundary of the former classifier over an isotropic log-concave distribution.
To see this, consider Figure~\ref{fig:margin-based}(a) that demonstrates the disagreement between classifiers $h_{\vec w}$ and $h_{\vec w^*}$ as wedges of  the distribution with total probability,
\begin{equation}
\err_\D(h_{\vec w}) - \err_\D(h_{\vec w^*}) \leq \Pr\Bracks{h_{\vec w}(\vec x) \neq h_{\vec w^*}(\vec x) } \leq \overline{C}_1 \theta(\vec w, \vec w^*),
\label{eq:wedge}
\end{equation}
where $\overline{C}_1$ is a constant according to property~\ref{eq:log:angle} of Theorem~\ref{thm:log:prop}.
This region can be partitioned to instances that are within some $\gamma$-boundary of $\vec w$ (red vertical stripes) and those that are far (blue horizontal stripes).
Since the distribution is log-concave, $\Theta(\gamma)$ fraction of it falls within distance $\gamma$ of the decision boundary of $h_{\vec w}$ (see Theorem~\ref{thm:log:prop} part~\ref{eq:log:band}).
Moreover, instances that are far from 
the decision boundary and are classified differently by $h_{\vec w}$ and $h_{\vec w^*}$ form a small fraction of the total disagreement region.
Formally, using properties \ref{eq:log:band} and \ref{eq:log:wedge} of Theorem~\ref{thm:log:prop}, for chosen constant $C_3 = \underline{C}_1/8$ and 
$\gamma := \alpha \cdot \max \left\{C'_3, \frac{\underline{C}_1}{\overline{C}_2} \right\}$, 
for any $\vec w$ such that $\theta(\vec w, \vec w^*) \leq \alpha$ 
\begin{equation}
\Pr\Bracks{|\vec w\cdot \vec x| \leq \gamma \text{ and }h_{\vec w}(\vec x) \neq h_{\vec w^*}(\vec x) } \leq \Pr\Bracks{|\vec w\cdot \vec x| \leq \gamma} \leq \overline{C}_2 \gamma,  \text{ and }
\label{eq:band-prob}
\end{equation}
\begin{equation}
 \Pr\Bracks{|\vec w\cdot \vec x|> \gamma \text{ and }h_{\vec w}(\vec x) \neq h_{\vec w^*}(\vec x) } \leq  C_3 \alpha \leq \frac{\overline{C}_2}{8} \gamma.
\label{eq:out-prob}
\end{equation}

\begin{figure}
\begin{center}
\epsfig{file=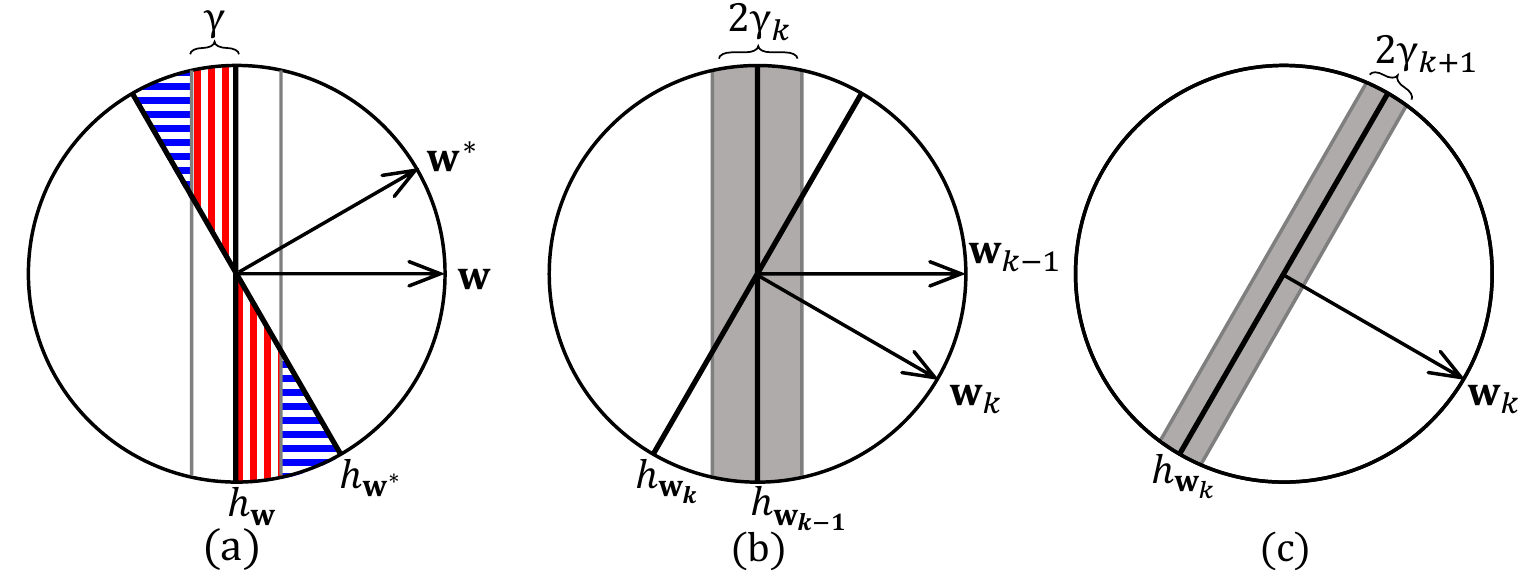,scale= 0.75}
\caption{Demonstrating localization and margin-baesd analysis. Figure (a) demonstrate the partition of disagreement of $h_{\vec w}$ and $h_{\vec w^*}$ to instances within the band and outside the band. Figures (b) and (c) demonstrates the search for $\vec w_{k}$ within $\alpha_k$ angle of $\vec w_{k-1}$ that has error of at most $c_0$ and the band in the subsequent iteration of step (2) of Algorithm~\ref{alg:ABL:oracle}}\label{fig:margin-based}
\end{center}
\end{figure} 

Since $h_{\vec w}$ has low disagreement far from its decision boundary as shown in Equation~\ref{eq:out-prob}, this suggests that to improve the overall performance one can focus on reducing the error near  the decision boundary of $h_{\vec w}$.
Consider $\vec w$, such that $\theta(\vec w, \vec w^*)\leq \alpha$, and any $\vec w'$ within angle $\alpha$ of $\vec w$.
Focusing on instances close to the boundary  $\{(\vec x, y) : |\vec w\cdot \vec x| \leq \gamma \}$, hereafter called the \emph{band}, and the corresponding distribution of labeled instances  in the band denoted by $\D_{\vec w, \gamma}$, if the disagreement of $h_{\vec w'}$ with $h_{\vec w^*}$ in the band is at most $c_0 = \min\left\{ \frac 14, \frac{\underline{C}_1}{4\overline{C}_2 C'_3}  \right\}$
then using the rightmost inequality of Equation~\ref{eq:band-prob} and the leftmost inequality of Equation~\ref{eq:out-prob}, and the fact that 
$\{\vec x : |\vec w\cdot \vec x|> \gamma, h_{\vec w'}(\vec x) \neq h_{\vec w^*}(\vec x)\} \subseteq 
\{ \vec x : |\vec w\cdot \vec x| > \gamma, h_{\vec w}(\vec x) \neq h_{\vec w^*}(\vec x)  \text{ or } h_{\vec w}(\vec x) \neq h_{\vec w'}(\vec x)  \}$, we have

\begin{align}
\Pr_\D\Bracks{h_{\vec w'}(\vec x) \neq h_{\vec w^*}(\vec x)} \leq \overline{C}_2 \gamma \cdot  \Pr_{ \D_{\vec w, \gamma}}\Bracks{h_{\vec w'}(\vec x) \neq h_{\vec w^*}(\vec x)}  + 2\alpha C_3 \leq \frac{\alpha \underline{C}_1 }{2}.
\label{eq:halving}
\end{align}
Thus,  $\theta(\vec w', \vec w^*) \leq \alpha/2$, by property~\ref{eq:log:angle} of Theorem~\ref{thm:log:prop}.
That is, given a classifier $h_{\vec w}$ that is at angle at most $\alpha$ to the optimal classifier, one can find an even better classifier that is at angle at most $\alpha/2$ by searching over classifiers that are within angle $\alpha$ of $\vec w$ and have at most a constant disagreement $c_0$ with the optimal classifier over the band.
This shows that localization reduces the problem of agnostically learning a linear threshold with error $\opt + \epsilon$ on an isotropic log-concave distribution to a learning problem within the band. 

In more detail, for the moment let us assume that we have an oracle for the band that returns $\vec w'$ within angle $\alpha$ of $\vec w^*$ such that the disagreement of $h_{\vec w'}$ with $h_{\vec w^*}$ is at most the previously defined constant $c_0$,  whenever the error of $h_{\vec w^*}$ in the band is small enough compared to $c_0$, i.e., for some fixed function $g(\cdot)$, $\err_{\D_{\vec w,\gamma}}(h_{\vec w^*})\leq g(c_0)$.
\begin{quote}
\underline{Oracle $\mathcal{O}(\vec w, \gamma, \alpha, \delta)$}:
Given $\vec w$, $\gamma$, $\alpha$, $\delta$, and a fixed error tolerance function $g(\cdot)$, such that $\theta(\vec w, \vec w^*)\leq \alpha$ and $\err_{\D_{\vec w, \gamma}}(h_{\vec w^*}) \leq g(c_0)$, the oracle
takes $m(\gamma, \delta, \alpha)$ samples from $\D$ and returns $h_{\vec w'}$ such that $\theta(\vec w', \vec w)\leq \alpha$ and with probability $1-\delta$,
$\Pr_{\D_{\vec w, \gamma}}[ h_{\vec w'}(\vec x) \neq h_{\vec w^*}(\vec x)] \leq c_0.
$
\end{quote}

Algorithm~\ref{alg:ABL:oracle} uses this oracle repeatedly to find a classifier of error $\opt + \epsilon$. We note that as we localize the problem in narrower bands we may increase the overall noise in the conditional distribution of the band. This inherently makes learning more challenging for the oracle.
 Therefore, we use function  $g(\cdot)$ to emphasize the pre-conditions under which the oracle should succeed.

\begin{figure}
\hrule\medskip
\textbf{Input}: Given $\epsilon, \delta$, sample access to $\D$, an in-band optimization oracle $\mathcal{O}$, and an initial classifier $h_{\vec w_1}$ such that $\theta(\vec w_1, \vec w^*) < \pi/2$. 
\begin{enumerate}
\item Let constant $c_{\gamma} = \max\left\{ C'_3, \underline{C}_1 / \overline{C}_2 \right\}$, $\alpha_k = 2^{-k}\pi$ and $\gamma_k =  \alpha_k \cdot c_{\gamma}$, for all $k$.
\item For $k = 1, \dots, \log\big(  \frac{\overline{C}_1\pi}{\epsilon} \big)-1=r$,
let $h_{\vec w_{k+1}} \gets \mathcal{O}\left(\vec w_k, \gamma_k, \alpha_k, \frac{\delta}{r} \right).$
\item Return $h_{\vec w_{r}}$.
\end{enumerate}
\caption{Localization with an Oracle.}\label{alg:ABL:oracle}
\medskip\hrule
\end{figure}

\begin{lemma}[Margin-Based Localization]
\label{lem:oracle-margin}
Assume that oracle $\mathcal{O}$ and a corresponding error tolerance function $g(\cdot)$ exist that satisfy the post-conditions of the oracle on the sequence of inputs $(\vec w_k, \gamma_k, \alpha_k, \delta/r)$ used in Algorithm~\ref{alg:ABL:oracle}.
There is a constant $c$ such that
for any distribution $\D$ with an isotropic log-concave marginal, $\delta$, and $\epsilon \geq c\opt$,
Algorithm~\ref{alg:ABL:oracle}, takes 

\[ m  = \sum_{k = 1}^{\log(\overline{C}_1/\epsilon)} m\left( \gamma_k, \alpha_k, \frac{\delta}{\log(\overline{C}_1/\epsilon)} \right)
\]
samples from $\D$ and returns $h_{\vec w}$ such that with probability $1-\delta$, $\err_\D(h_{\vec w}) \leq \opt + \epsilon$. 
\end{lemma}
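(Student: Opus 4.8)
I would prove the lemma by induction on the loop counter $k$ in Algorithm~\ref{alg:ABL:oracle}, maintaining the invariant $\theta(\vec w_k, \vec w^*) \le \alpha_k$, and then convert the final angle bound into an error bound via property~\ref{eq:log:angle} of Theorem~\ref{thm:log:prop}. The base case is immediate: by hypothesis the initial classifier satisfies $\theta(\vec w_1, \vec w^*) < \pi/2 = \alpha_1$. All the geometric work needed for the inductive step has in fact already been assembled in Equations~\eqref{eq:wedge}--\eqref{eq:halving}; the proof is mostly a matter of checking that the oracle's two preconditions hold at every iteration so that \eqref{eq:halving} can be invoked, and then doing a union bound.

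\textbf{Inductive step.} Assume $\theta(\vec w_k, \vec w^*) \le \alpha_k$. Before the call $h_{\vec w_{k+1}} \gets \mathcal{O}(\vec w_k, \gamma_k, \alpha_k, \delta/r)$ I would verify the oracle's preconditions. The first, $\theta(\vec w_k, \vec w^*) \le \alpha_k$, is exactly the inductive hypothesis. The second requires $\err_{\D_{\vec w_k, \gamma_k}}(h_{\vec w^*}) \le g(c_0)$; here I use that the band $\{\vec x : |\vec w_k \cdot \vec x| \le \gamma_k\}$ has probability mass at least $\underline{C}_2 \gamma_k$ by property~\ref{eq:log:band} of Theorem~\ref{thm:log:prop}, so conditioning on the band inflates the error of $h_{\vec w^*}$ by at most a factor $1/(\underline{C}_2 \gamma_k)$, giving $\err_{\D_{\vec w_k,\gamma_k}}(h_{\vec w^*}) \le \opt/(\underline{C}_2 c_\gamma \alpha_k)$. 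Over the range of the loop we have $\alpha_k \ge \alpha_r = \Theta(\epsilon/\overline{C}_1)$, so using $\epsilon \ge c\,\opt$ this quantity is bounded by a constant that depends only on $\overline{C}_1, \underline{C}_2, c_\gamma$ and can be made at most $g(c_0)$ by taking $c$ large enough. With both preconditions in hand, the oracle returns $h_{\vec w_{k+1}}$ with $\theta(\vec w_{k+1}, \vec w_k) \le \alpha_k$ and, with probability $1-\delta/r$, $\Pr_{\D_{\vec w_k,\gamma_k}}[h_{\vec w_{k+1}}(\vec x) \neq h_{\vec w^*}(\vec x)] \le c_0$. Applying \eqref{eq:halving} with $\vec w = \vec w_k$, $\vec w' = \vec w_{k+1}$, $\alpha = \alpha_k$, $\gamma = \gamma_k$ (the constants $c_0$ and $c_\gamma$ were defined precisely to make that inequality go through) yields $\Pr_\D[h_{\vec w_{k+1}}(\vec x) \neq h_{\vec w^*}(\vec x)] \le \alpha_k \underline{C}_1 / 2$, hence $\theta(\vec w_{k+1}, \vec w^*) \le \alpha_k/2 = \alpha_{k+1}$ by property~\ref{eq:log:angle}, closing the induction.

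\textbf{Wrap-up.} A union bound over the $r$ oracle calls shows that with probability $1-\delta$ every iteration succeeds, so the returned classifier $h_{\vec w}$ satisfies $\theta(\vec w, \vec w^*) \le \epsilon/\overline{C}_1$ after the prescribed number of rounds (adjusting the number of rounds by an additive constant if needed). Then by property~\ref{eq:log:angle} of Theorem~\ref{thm:log:prop} together with $\err_\D(h_{\vec w}) \le \err_\D(h_{\vec w^*}) + \Pr_\D[h_{\vec w}(\vec x) \neq h_{\vec w^*}(\vec x)]$, we get $\err_\D(h_{\vec w}) \le \opt + \overline{C}_1 \theta(\vec w, \vec w^*) \le \opt + \epsilon$. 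The sample count is just the sum of the per-iteration requirements $m(\gamma_k, \alpha_k, \delta/r)$, which is the stated quantity.

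\textbf{Main obstacle.} Everything downstream of \eqref{eq:halving} (the angle induction, the union bound, the sample accounting) is bookkeeping; the one genuinely delicate point is the second oracle precondition. We must ensure that \emph{even in the narrowest band}---the last iteration, where $\gamma_k$ is smallest---the conditional error of $h_{\vec w^*}$ stays below the fixed tolerance $g(c_0)$. This is exactly where $\epsilon \ge c\,\opt$ is used, and it forces $c$ to be a constant large enough that $\opt \overline{C}_1/(\underline{C}_2 c_\gamma \epsilon) \le g(c_0)$, i.e.\ a constant-versus-constant comparison among $\opt/\epsilon$, the isotropic log-concave constants, and the oracle's tolerance function. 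Getting these constant choices mutually consistent (recall $c_0$, $c_\gamma$, $C_3$, and $\gamma_k$ were all pinned down earlier precisely so that \eqref{eq:band-prob}--\eqref{eq:halving} hold) is the part that requires care.
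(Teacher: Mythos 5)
Your proof is correct and follows essentially the same route as the paper's: halve the angle each iteration via Equation~\eqref{eq:halving}, check the oracle's second precondition by noting the band has probability mass at least $\underline{C}_2\gamma_k = \Omega(\epsilon)$ so that $\err_{\D_{\vec w_k,\gamma_k}}(h_{\vec w^*}) \le \opt/(\underline{C}_2\gamma_k) \le g(c_0)$ once $\epsilon \ge c\,\opt$, union bound over the $r$ calls, and finish with property~\ref{eq:log:angle}. The only cosmetic difference is that you interleave the precondition check into the inductive step, whereas the paper defers it to the end of the argument.
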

\begin{proof}
Algorithm~\ref{alg:ABL:oracle} starts with $\vec w_1$ of angle at most $\alpha_1$ to $\vec w^*$.
Assume for now that $\err_{\D_{\vec w_k, \gamma_k}}(h_{\vec w^*})\leq g(c_0)$ for all $\vec w_k$ and $\gamma_k$  used by the algorithm, so that
the pre-conditions of the oracle are met.
Therefore, at every subsequent Step(2) of the algorithm, the oracle returns
$\vec w_{k+1} \gets \mathcal{O}(\vec w_k, \gamma_k, \alpha_k, \delta/r)$ such that with probability $1-\delta/r$, $\Pr_{\D_{\vec w_k, \gamma_k}}[h_{\vec w_{k+1}}(\vec x) \neq h_{\vec w^*}(\vec x)] \leq c_0$.
Using Equation~\ref{eq:halving}, $\theta(\vec w_{k+1}, \vec w^*) \leq \theta(\vec w_{k}, \vec w^*) /2 \leq\alpha_{k+1}$, i.e., angle of the candidate classifier to $\vec w^*$ is halved.
After $r = \log(\overline{C}_1 \pi/\epsilon)-1$ iterations, we have
$\theta(\vec w_r, \vec w^*) \leq \epsilon/\overline{C}_1$.
Using the relationship between the error of $h_{\vec w}$ and its angle to $h_{\vec w*}$ as described in Equation~\ref{eq:wedge},
$\err_\D(h_{\vec w_r})  \leq  \err_\D(h_{\vec w^*}) + \overline{C}_1\theta(\vec w^*, \vec w_r)  \leq \opt + \epsilon.$ This approach works for all noise types.

Now, we use the properties of the (adversarial) noise to show that the pre-conditions of the oracle are met as long as the width of the band is not much smaller than $\opt$.
That is, for all $k\leq r$, $\err_{\D_{\vec w_k, \gamma_k}}(h_{\vec w^*}) < g(c_0)$ when $\epsilon > c~\opt$.
As we focus on a band of width $\gamma_k$ we may focus on areas where $h_{\vec w^*}$ is wrong. But, any band of width $\gamma_k$
constitutes at least $\underline{C}_2\gamma_k \in \Omega(\epsilon)$
fraction of $\D$ for $k \leq r$.
Hence, there is constant $c$ for which 
$\err_{\D_{\vec w_k, \gamma_k}}(h_{\vec w^*}) \leq \frac{\opt}{\underline{C}_2 \gamma_k} \leq g(c_0),
$ for all $\epsilon \geq c\,\opt$.
\end{proof}

Lemma~\ref{lem:oracle-margin} shows that to get a computationally efficient algorithm for learning over a log-concave distribution, it is sufficient to implement oracle $\mathcal{O}$ efficiently.
We use \emph{hinge loss minimization} for this purpose.
Formally, hinge loss with parameter $\tau$ is defined by
$ \ell_\tau(\vec w, \vec x, y) = \max\Braces{0, 1- \frac{y(\vec w\cdot \vec x)}{\tau}}.$
Note that whenever $h_{\vec w}$ makes a mistake on $(\vec x, y)$ the hinge loss is at least $1$. Therefore,
$\err(h_{\vec w}) \leq \EE\Bracks{\ell_\tau(\vec w, \vec x, y)}.$
Thus, it suffices to show that for any distribution $\D_{\vec w_k, \gamma_k}$ we can find $\vec w_{k+1}$ whose expected $\tau_k$ hinge loss is at most $c_0$. Since hinge loss is a convex function, we can efficiently optimize it over the band using Algorithm~\ref{fig:alg:hinge}. So, the main technical challenge is to show that there is a classifier, namely $h_{\vec w^*}$, whose hinge loss is sufficiently smaller than $c_0$, and therefore, Algorithm~\ref{fig:alg:hinge} returns a classifier whose hinge loss is also less than $c_0$.
This is done via a series of technical steps, where first the hinge loss of $h_{\vec w^*}$ is shown to be small when the distribution has no noise and then it is shown that noise can increase the hinge loss of $h_{\vec w^*}$ by a small amount only.
We refer the reader to the work of \citet{ABL17} for more details.

\begin{figure}[t]
\hrule\medskip
\textbf{Input}: Unit vector $\vec w_k$, $\gamma_k$, $\alpha_k$, $\delta$, and sampling access to $\D$. 
\begin{enumerate}
\item Take a set $S$ of $\tilde\Theta\left(\frac{d^2}{\gamma_k c_0^2} \ln\big( \frac 1 \epsilon \big)\ln\big( \frac 1 \delta\big) \right)$ samples and let $S_k = \Braces{(\vec x, y)   \mid |\vec w_k \cdot \vec x| \leq \gamma_k} $.

\item Let $\tau_k = \gamma_k c_0 \underline{C}_2 / 4 \overline{C}_2$ and for the convex
set $\mathcal{K} = \{ \vec v \mid \| \vec v\| \leq 1 \text{ and } \theta(\vec v, \vec w) \leq \alpha_k\}$ let
$ \vec v_{k+1} \gets \argmin_{\vec v\in \mathcal{K}}\EE_S\Bracks{\ell_{\tau_k}(\vec v, \vec x, y)}. $
\item Return $\vec w_{k+1} = \frac{\vec v_{k+1}}{\|\vec v_{k+1}\|}$.
\end{enumerate}
\caption{Hinge Loss Minimization in the Band.}\label{fig:alg:hinge}
\medskip\hrule
\end{figure}

\begin{lemma}[Hinge Loss Minimization]
\label{lem:hinge}
There is a function $g(z) \in \Theta(z^4)$, such that
for any distribution $\D$ with an isotropic log-concave marginal, 
given $\vec w_k$, $\gamma_k$, and $\alpha_k$ used by Algorithm~\ref{alg:ABL:oracle}, such that $\theta(\vec w_k, \vec w^*)\leq \alpha_k$ and  $\err_{\D_{\vec w_k, \gamma_k}}(h_{\vec w^*})\leq g(c_0)$,
Algorithm~\ref{fig:alg:hinge} takes $n_k = \tilde\Theta\left(\frac{d^2}{\gamma_k c_0^2} \ln(1/\epsilon)\ln(1/\delta) \right)$ samples from $\D$ and returns $\vec w_{k+1}$ such that $\theta(\vec w_{k+1}, \vec w_k)\leq \alpha_k$ and with probability $1-\delta$,
$\Pr_{\D_{\vec w_k, \gamma_k}}[ h_{\vec w_{k+1}}(\vec x) \neq h_{\vec w^*}(\vec x)] \leq c_0.
$
\end{lemma}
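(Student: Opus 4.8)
The plan is to use hinge loss as a convex surrogate for the $0/1$ disagreement inside the band and to argue in two stages. First, a generalization step: since every $\vec v$ with $\|\vec v\|\le 1$ has $\mathbf{1}[h_{\vec v}(\vec x)\neq y]\le \ell_{\tau_k}(\vec v,\vec x,y)$ (a mistake forces the margin to be non-positive, hence the hinge loss to be $\ge 1$), and since the precondition $\theta(\vec w^*,\vec w_k)\le\alpha_k$ puts $\vec w^*\in\mathcal{K}$, it suffices to show that the empirical minimizer $\vec v_{k+1}$ of Algorithm~\ref{fig:alg:hinge} has expected $\tau_k$-hinge loss over $\D_{\vec w_k,\gamma_k}$ close to that of $\vec w^*$. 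Second --- and this is the crux --- I would bound $\EE_{\D_{\vec w_k,\gamma_k}}[\ell_{\tau_k}(\vec w^*,\vec x,y)]$ by roughly $c_0/2$. Given these, $\err_{\D_{\vec w_k,\gamma_k}}(h_{\vec w_{k+1}})\le\EE_{\D_{\vec w_k,\gamma_k}}[\ell_{\tau_k}(\vec v_{k+1},\vec x,y)]\le c_0/2+(\text{two generalization errors})$, and by the union bound $\Pr_{\D_{\vec w_k,\gamma_k}}[h_{\vec w_{k+1}}(\vec x)\neq h_{\vec w^*}(\vec x)]\le\err_{\D_{\vec w_k,\gamma_k}}(h_{\vec w_{k+1}})+\err_{\D_{\vec w_k,\gamma_k}}(h_{\vec w^*})\le c_0/2+(\text{gen.\ err.})+g(c_0)$, which we force below $c_0$ by choosing the constants. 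The angle requirement $\theta(\vec w_{k+1},\vec w_k)\le\alpha_k$ is automatic because $\vec v_{k+1}\in\mathcal{K}$ and normalization preserves angles.

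For the generalization step I would first note that, by part~\ref{eq:log:band} of Theorem~\ref{thm:log:prop}, the band carries mass $\Theta(\gamma_k)$, so drawing $n_k$ points from $\D$ puts $|S_k|=\tilde\Theta(d^2/c_0^2)$ of them in the band with probability $1-\delta/4$ (Chernoff). On the band, $\ell_{\tau_k}(\vec v,\vec x,y)\le 1+\|\vec x\|/\tau_k$, and by the tail bound (part~\ref{eq:log:tail}) all sampled $\vec x$ satisfy $\|\vec x\|=\tilde O(\sqrt d)$ with probability $1-\delta/4$ (the $\ln(1/\epsilon)$ factor in $n_k$ absorbs the union bound over $\poly(d,1/\epsilon)$ samples, using $\gamma_k\ge\Omega(\epsilon)$). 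Since the hinge loss is $1/\tau_k$-Lipschitz in $\vec v\cdot\vec x$ and the linear functionals $\{\vec x\mapsto\vec v\cdot\vec x:\|\vec v\|\le1\}$ over a bounded domain have the usual $O(\|\vec x\|/\sqrt{|S_k|})$ Rademacher complexity, a standard uniform-convergence bound gives $\sup_{\vec v\in\mathcal{K}}|\EE_{S_k}[\ell_{\tau_k}(\vec v,\vec x,y)]-\EE_{\D_{\vec w_k,\gamma_k}}[\ell_{\tau_k}(\vec v,\vec x,y)]|\le c_0/8$ with probability $1-\delta/2$ for the stated $n_k$. Then $\vec w^*\in\mathcal{K}$ being a feasible competitor for the empirical minimization yields $\EE_{\D_{\vec w_k,\gamma_k}}[\ell_{\tau_k}(\vec v_{k+1},\vec x,y)]\le\EE_{\D_{\vec w_k,\gamma_k}}[\ell_{\tau_k}(\vec w^*,\vec x,y)]+c_0/4$.

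For the crux I would split the band into the ``clean'' points ($y=h_{\vec w^*}(\vec x)$) and the ``noisy'' points ($y\neq h_{\vec w^*}(\vec x)$). On a clean point $\ell_{\tau_k}(\vec w^*,\vec x,y)=\max\{0,1-|\vec w^*\cdot\vec x|/\tau_k\}\le\mathbf{1}[|\vec w^*\cdot\vec x|\le\tau_k]$, so the clean contribution is at most $\Pr_{\D_{\vec w_k,\gamma_k}}[|\vec w^*\cdot\vec x|\le\tau_k]\le\overline{C}_2\tau_k/(\underline{C}_2\gamma_k)=c_0/4$, using part~\ref{eq:log:band} both for the lower bound $\underline{C}_2\gamma_k$ on the band's mass and the upper bound $\overline{C}_2\tau_k$ on the mass of $\{|\vec w^*\cdot\vec x|\le\tau_k\}$ --- this is exactly what the choice $\tau_k=\gamma_k c_0\underline{C}_2/(4\overline{C}_2)$ is designed for. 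On a noisy point $\ell_{\tau_k}(\vec w^*,\vec x,y)\le 1+|\vec w^*\cdot\vec x|/\tau_k$, so the noisy contribution is at most $\err_{\D_{\vec w_k,\gamma_k}}(h_{\vec w^*})+\tau_k^{-1}\EE_{\D_{\vec w_k,\gamma_k}}[\mathbf{1}[y\neq h_{\vec w^*}(\vec x)]\,|\vec w^*\cdot\vec x|]$, and Cauchy--Schwarz bounds the second term by $\tau_k^{-1}\sqrt{\err_{\D_{\vec w_k,\gamma_k}}(h_{\vec w^*})}\cdot\sqrt{\EE_{\D_{\vec w_k,\gamma_k}}[(\vec w^*\cdot\vec x)^2]}$. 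Writing $\vec w^*=\cos\beta\,\vec w_k+\sin\beta\,\vec u$ with $\vec u\perp\vec w_k$ and $\sin\beta\le\alpha_k=\gamma_k/c_\gamma$, the band constraint $|\vec w_k\cdot\vec x|\le\gamma_k$ controls the first term and part~\ref{eq:log:marginal} (the marginal on any two-dimensional subspace is isotropic log-concave, so conditioning on the thin slab cannot blow up the variance in the direction orthogonal to $\vec w_k$) gives $\EE_{\D_{\vec w_k,\gamma_k}}[(\vec w^*\cdot\vec x)^2]=O(\gamma_k^2)$. Hence the noisy contribution is $O(g(c_0))+O(\gamma_k/\tau_k)\sqrt{g(c_0)}=O(g(c_0))+O(1/c_0)\sqrt{g(c_0)}$, which is precisely why taking $g(z)=\Theta(z^4)$ with a small enough hidden constant makes it at most $c_0/4$; altogether $\EE_{\D_{\vec w_k,\gamma_k}}[\ell_{\tau_k}(\vec w^*,\vec x,y)]\le c_0/2$, and chaining back through the two previous paragraphs gives $\Pr_{\D_{\vec w_k,\gamma_k}}[h_{\vec w_{k+1}}(\vec x)\neq h_{\vec w^*}(\vec x)]\le c_0/2+c_0/4+c_0/8\le c_0$ after also absorbing $g(c_0)$.

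I expect the main obstacle to be the geometric estimate $\EE_{\D_{\vec w_k,\gamma_k}}[(\vec w^*\cdot\vec x)^2]=O(\gamma_k^2)$: the band pins down only the $\vec w_k$-coordinate of $\vec x$, so one must genuinely argue that conditioning on a thin slab does not inflate the second moment in the nearly-orthogonal direction $\vec w^*$, which is where the full strength of log-concavity (beyond the explicitly listed properties) is needed. A secondary, more bookkeeping, obstacle is making the constants in the clean/noisy split together with the two generalization errors provably sum below $c_0$, which is what pins down the exact forms of $\tau_k$ and $g(z)=\Theta(z^4)$; these, together with the precise sample complexity $n_k$, are carried out in detail in \citet{ABL17}.
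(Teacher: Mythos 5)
Your reconstruction matches both the paper's brief sketch (show the hinge loss of $h_{\vec w^*}$ is small by treating the noise-free contribution and the noise contribution separately) and the full argument it delegates to \citet{ABL17}: the clean contribution is controlled by the ratio $\overline{C}_2\tau_k/(\underline{C}_2\gamma_k)$, the noisy contribution by Cauchy--Schwarz together with the in-band second moment $\EE_{\D_{\vec w_k,\gamma_k}}[(\vec w^*\cdot\vec x)^2]=O(\gamma_k^2)$, and $g(z)=\Theta(z^4)$ is exactly what makes $\gamma_k\sqrt{g(c_0)}/\tau_k=O(\sqrt{g(c_0)}/c_0)$ small relative to $c_0$. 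You also correctly flagged the one step that needs more than the properties listed in Theorem~\ref{thm:log:prop} --- that conditioning an isotropic log-concave distribution on a thin slab along $\vec w_k$ does not inflate the variance in a nearly orthogonal direction beyond a constant --- which is indeed the technically delicate lemma in \citet{ABL17}.
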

Lemmas~\ref{lem:oracle-margin} and \ref{lem:hinge} prove the main result of this section.

\begin{theorem} [\citet{ABL17}]
\label{thm:ABL}
Consider distribution $\D$ with an isotropic log-concave marginal.
There is a constant $c$ such that for all $\delta$ and $\epsilon \geq c~ \opt$, there is $m\in \tilde O\left(\frac{d^2}{\epsilon}\ln\big( \frac 1 \delta \big) \right)$ for which with probability $1-\delta$, Algorithm~\ref{alg:ABL:oracle} using Algorithm~\ref{fig:alg:hinge} for optimization in the band, takes $m$ samples from $\D$,  and returns a classifier $h_{\vec w}$ whose error is $\err_\D(h_{\vec w}) \leq \opt + \epsilon$.
\end{theorem}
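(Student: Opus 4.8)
The plan is to obtain Theorem~\ref{thm:ABL} as a clean composition of the two preceding lemmas, with only a geometric-series sample count and a warm-start remark added on top. First I would use Lemma~\ref{lem:hinge} to instantiate the in-band oracle $\mathcal{O}$ required by Lemma~\ref{lem:oracle-margin}: with the fixed choice $g(z)\in\Theta(z^4)$, Algorithm~\ref{fig:alg:hinge} run on input $(\vec w_k,\gamma_k,\alpha_k,\delta')$ under the hypotheses $\theta(\vec w_k,\vec w^*)\le\alpha_k$ and $\err_{\D_{\vec w_k,\gamma_k}}(h_{\vec w^*})\le g(c_0)$ returns a unit vector $\vec w_{k+1}$ with $\theta(\vec w_{k+1},\vec w_k)\le\alpha_k$ and, with probability $1-\delta'$, in-band disagreement at most $c_0$ with $h_{\vec w^*}$ --- which is exactly the post-condition of $\mathcal{O}$ --- using $n_k=\tilde\Theta\bigl(d^2\gamma_k^{-1}c_0^{-2}\ln(1/\epsilon)\ln(1/\delta')\bigr)$ samples and running a single convex (hinge-loss) program over a $d$-dimensional convex body. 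Since $c_0$ is an absolute constant, $g(c_0)$ is an absolute constant too.

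Next I would verify that the two pre-conditions of $\mathcal{O}$ are maintained along the entire run of Algorithm~\ref{alg:ABL:oracle}; this is the inductive core of Lemma~\ref{lem:oracle-margin}. The angle invariant $\theta(\vec w_k,\vec w^*)\le\alpha_k=2^{-k}\pi$ holds at $k=1$ since the input satisfies $\theta(\vec w_1,\vec w^*)<\pi/2=\alpha_1$, and Equation~\ref{eq:halving} shows each successful oracle call carries it from $k$ to $k+1$ by halving the angle. (The initial $h_{\vec w_1}$ itself is obtained by a constant-error averaging-type warm start, cf.\ Theorem~\ref{thm:average} and \citet{ABL17}; this is legitimate because $\epsilon\ge c\,\opt$ forces $\opt$ below a small constant once $\epsilon$ is taken below a small constant, which is without loss of generality.) For the second pre-condition, Theorem~\ref{thm:log:prop}(\ref{eq:log:band}) gives every band mass at least $\underline{C}_2\gamma_k\ge\underline{C}_2\gamma_r=\Omega(\epsilon)$, so $\err_{\D_{\vec w_k,\gamma_k}}(h_{\vec w^*})\le\opt/(\underline{C}_2\gamma_k)=O(\opt/\epsilon)$, which is below the constant $g(c_0)$ precisely when $\epsilon\ge c\,\opt$ for a suitable $c$. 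A union bound over the $r=\log(\overline{C}_1\pi/\epsilon)-1$ iterations with per-call failure probability $\delta/r$ makes all oracle calls succeed with probability $1-\delta$; then the angle is halved $r$ times so $\theta(\vec w_r,\vec w^*)\le\epsilon/\overline{C}_1$, and Equation~\ref{eq:wedge} yields $\err_\D(h_{\vec w_r})\le\opt+\overline{C}_1\theta(\vec w_r,\vec w^*)\le\opt+\epsilon$.

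Finally I would add up the samples. With $\delta'=\delta/r$ we get $m=\sum_{k=1}^{r}n_k=\tilde\Theta\bigl(d^2c_0^{-2}\ln(1/\epsilon)\ln(r/\delta)\bigr)\sum_{k=1}^{r}\gamma_k^{-1}$, and since $\gamma_k=2^{-k}\pi c_\gamma$ the sum $\sum_{k\le r}\gamma_k^{-1}$ is a geometric series dominated by its last term $\gamma_r^{-1}=\Theta(1/\epsilon)$. Hence $m=\tilde O\bigl(d^2\epsilon^{-1}\ln(1/\delta)\bigr)$ after folding $\ln(1/\epsilon)$, $\ln r=\ln\log(1/\epsilon)+O(1)$, and the constant $c_0$ into the $\tilde O(\cdot)$, and the total running time is $\poly(d,1/\epsilon)$ since there are only $O(\log(1/\epsilon))$ iterations, each a single polynomial-size convex program.

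The main obstacle --- once Lemmas~\ref{lem:oracle-margin} and~\ref{lem:hinge} are granted --- is making the quantitative chain of constants close: one must check that the band width $\gamma_k$ shrinks at exactly the same geometric rate as the target angle $\alpha_k$, so that every band always captures an $\Omega(\epsilon)$ fraction of the mass and the in-band noise stays below the fixed threshold $g(c_0)$, and that the constants $c_\gamma$, $c_0$, $C_3=\underline{C}_1/8$, and $\gamma:=\alpha\max\{C'_3,\underline{C}_1/\overline{C}_2\}$ fixed in Equations~\ref{eq:band-prob}--\ref{eq:halving} are mutually consistent so that a single successful oracle call genuinely halves the angle. The truly hard analytic work --- lower-bounding the hinge-loss separation between $h_{\vec w^*}$ and bad classifiers in the band, and bounding the hinge-loss inflation caused by adversarial noise --- is internal to Lemma~\ref{lem:hinge}, and I would cite \citet{ABL17} for it rather than reprove it.
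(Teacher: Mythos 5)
Your proposal is correct and takes exactly the route the paper intends: the paper's entire proof of Theorem~\ref{thm:ABL} is the one-line remark that Lemmas~\ref{lem:oracle-margin} and~\ref{lem:hinge} together imply it, and your write-up is precisely the spelled-out composition of those two lemmas (instantiate $\mathcal{O}$ with Algorithm~\ref{fig:alg:hinge}, propagate the two oracle pre-conditions by induction, union-bound over the $r=\log(\overline{C}_1\pi/\epsilon)-1$ rounds, and sum the geometric series $\sum_k \gamma_k^{-1}=\Theta(1/\epsilon)$ to get $m\in\tilde O(d^2\epsilon^{-1}\ln(1/\delta))$). The only cosmetic difference is that you add a warm-start remark for $\vec w_1$, whereas the paper simply takes the existence of an initial $h_{\vec w_1}$ with $\theta(\vec w_1,\vec w^*)<\pi/2$ as part of Algorithm~\ref{alg:ABL:oracle}'s input specification.
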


\emph{The Surprising Power of Localization.}
The above localization technique can also be used with stronger malicious adversaries who not only can change the label of a fraction of instances, but also \emph{can change the shape of the underlying distribution}. This noise is commonly referred to as \emph{malicious noise} or \emph{poisoning attack}.

Consider applying Algorithm~\ref{alg:ABL:oracle}
in presence of 
malicious noise when the original distribution has an isotropic log-concave marginal. Since malicious noise changes the marginal distribution of  instances, it is not clear if hinge-loss minimization of Lemma~\ref{lem:hinge} can find a suitable classifier $h_{\vec w_{k+1}}$. To deal with this \citet{ABL17} introduce a \emph{soft outlier removal} technique that is applied before  hinge loss minimization in every step of Algorithm~\ref{alg:ABL:oracle}.
At a high level, this procedure assigns  weights to instances in the band, which indicate the algorithm's confidence that these instance were not introduced by ``malicious noise''.
These weights are computed by a linear program that takes into account the variance of instances in the band in directions that are close to $\vec w_k$ in angle. The algorithm uses weighted hinge loss minimization to find  $\vec w_{k+1}$ with similar guarantees to those stated in Lemma~\ref{lem:hinge}. This shows that when the original distribution has an isotropic log-concave marginal, a variant of Algorithm~\ref{alg:ABL:oracle} can deal with  malicious adversaries.

\begin{theorem} [\citet{ABL17}]
\label{thm:ABL-malicious}
Consider a realizable distribution $\D$ with an isotropic log-concave marginal and consider 
a setting where $(1-\opt)$ fraction of the data comes i.i.d. from $\D$ and the other $\opt$ fraction is chosen by a malicious adversary.
There is a constant $c$ such that for all $\delta$ and $\epsilon \geq c~ \opt$, there is an algorithm that takes $m\in \poly(d, \frac 1\epsilon)$ samples and runs in time $\poly(d, \frac 1\epsilon)$ and with probability $1-\delta$ returns a classifier $h_{\vec w}$ whose error is $\err_\D(h_{\vec w}) \leq \epsilon$.
\end{theorem}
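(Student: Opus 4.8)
The plan is to reuse the localization skeleton behind Theorem~\ref{thm:ABL} essentially unchanged and to concentrate all of the new work inside a single band. Lemma~\ref{lem:oracle-margin} already reduces the task to implementing an in-band oracle that, given $\vec w_k$ at angle at most $\alpha_k$ to $\vec w^*$, returns $\vec w_{k+1}$ at angle at most $\alpha_k$ to $\vec w_k$ with $\Pr_{\D_{\vec w_k,\gamma_k}}[h_{\vec w_{k+1}}(\vec x)\neq h_{\vec w^*}(\vec x)]\leq c_0$, and its proof invokes the log-concave properties of Theorem~\ref{thm:log:prop} and the estimate $\err_{\D_{\vec w_k,\gamma_k}}(h_{\vec w^*})\leq \opt/(\underline{C}_2\gamma_k)$ only for the \emph{clean} component of the data. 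Both remain available here: $(1-\opt)$ of the sample is i.i.d.\ from the log-concave $\D$, and by realizability $h_{\vec w^*}$ is perfect on that component. So it again suffices to implement the in-band oracle, except that the conditional distribution of band $k$ is now a mixture of the clean log-concave band distribution with weight $\geq 1-O(\opt/\gamma_k)$ and arbitrary adversarial points, so the adversarial fraction \emph{after} conditioning is $\opt' = \Theta(\opt/\gamma_k)$.

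The new ingredient is a soft outlier-removal step inserted before the hinge loss minimization of Algorithm~\ref{fig:alg:hinge}. I would solve a linear program over weights $q(\vec x)\in[0,1]$ on the in-band sample that (i) discards little mass, $\EE_{S_k}[1-q(\vec x)]\leq O(\opt')$, and (ii) caps the reweighted directional second moment, $\EE_{S_k}[q(\vec x)(\vec v\cdot\vec x)^2]\leq O(\gamma_k^2)$ for every unit $\vec v$ with $\theta(\vec v,\vec w_k)\leq\alpha_k$. Feasibility is the first thing to verify: the ``ideal'' weights that keep clean points and drop adversarial ones nearly satisfy both constraints, since by Theorem~\ref{thm:log:prop} the clean band distribution has sub-exponential tails and hence directional second moments $O(\gamma_k^2)$, up to uniform-convergence error absorbed by the sample size of Algorithm~\ref{fig:alg:hinge}. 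One then runs \emph{weighted} hinge loss minimization, $\vec v_{k+1}=\argmin_{\vec v\in\mathcal{K}}\EE_{S_k}[q(\vec x)\,\ell_{\tau_k}(\vec v,\vec x,y)]$, which is still convex and runs in $\poly(d,1/\epsilon)$ time.

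The in-band analysis then mirrors Lemma~\ref{lem:hinge}, with the variance cap doing the work in two places. First, the weighted hinge loss of $\vec w^*$ is small: on clean points this is the noiseless bound behind Lemma~\ref{lem:hinge} (realizability makes $h_{\vec w^*}$ perfect, and a perfect classifier has hinge loss $O(\gamma_k/\tau_k)=O(1)$ in a width-$\gamma_k$ band with $\tau_k=\Theta(\gamma_k)$), while on adversarial points Cauchy--Schwarz gives $\EE_{S_k}[q(\vec x)\mathbf{1}_{\mathrm{adv}}] + \tau_k^{-1}\sqrt{\EE_{S_k}[q(\vec x)\mathbf{1}_{\mathrm{adv}}]\cdot\EE_{S_k}[q(\vec x)(\vec w^*\cdot\vec x)^2]}$, whose second factor is $O(\gamma_k^2)$ by constraint (ii) because $\theta(\vec w^*,\vec w_k)\leq\alpha_k$, so the whole thing is $O(\sqrt{\opt'})$ and can be driven below $c_0$ by choosing constants once $\epsilon\geq c\,\opt$. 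Second, converting the small weighted hinge loss of the returned $\vec w_{k+1}$ into a small \emph{unweighted} in-band disagreement $\Pr_{\D_{\vec w_k,\gamma_k}}[h_{\vec w_{k+1}}\neq h_{\vec w^*}]\leq c_0$ costs only an $O(\opt')$ term from the discarded mass plus another Cauchy--Schwarz term tamed by constraint (ii). Feeding this oracle and its tolerance function $g(\cdot)$ through the argument of Lemma~\ref{lem:oracle-margin}, the returned $\vec w$ satisfies $\theta(\vec w^*,\vec w)\leq\epsilon/\overline{C}_1$, so by realizability of the clean $\D$ and property~\ref{eq:log:angle} of Theorem~\ref{thm:log:prop}, $\err_\D(h_{\vec w})\leq\overline{C}_1\,\theta(\vec w^*,\vec w)\leq\epsilon$; the hypothesis $\epsilon\geq c\,\opt$ is needed only to keep every band wide enough that the oracle's pre-conditions hold. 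Sample size and running time are $\poly(d,1/\epsilon)$.

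I expect the main obstacle to be exactly the soft outlier removal step and its interface with the hinge loss. One must exhibit a single weight vector that is simultaneously (a) \emph{certifiably} feasible from a polynomial sample --- which forces an argument about the reweighted second-moment matrix \emph{uniformly} over the cone of directions near $\vec w_k$, via an $\epsilon$-net or matrix-concentration bound rather than one fixed direction --- (b) leaves enough clean mass that the ``perfect classifier has small hinge loss'' estimate survives, and (c) suppresses the adversarial second moment enough that the Cauchy--Schwarz bound genuinely beats the fixed constant $c_0$. These requirements pull against one another, and the slack is tightest precisely when the band width reaches $\gamma_k=\Theta(\opt)$, which is what forces the threshold $\epsilon\geq c\,\opt$.
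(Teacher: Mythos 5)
Your proposal matches the paper's (sketched) approach, which itself defers to \citet{ABL17}: keep Algorithm~\ref{alg:ABL:oracle} and replace the per-band oracle by a soft outlier-removal linear program constraining both the discarded mass and the reweighted directional second moments for directions near $\vec w_k$, followed by weighted hinge-loss minimization, with Cauchy--Schwarz converting the second-moment cap into a bound on the adversarial contribution. The obstacles you flag---uniform control over the cone of directions, LP feasibility from a polynomial sample, and the threshold $\epsilon\geq c\,\opt$ forced by the adversarial fraction $\Theta(\opt/\gamma_k)$ in the narrowest band---are exactly the points \citet{ABL17} handle.
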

The above theorem shows that localization can extend the guarantees of Theorem~\ref{thm:ABL} against stronger adversaries. 
This result  improves over previously known result of \citet{KLS09} which only handle a smaller amount of dimension-dependent noise.

In Section~\ref{sec:22:massart}, we will see that localization is also useful in obtaining much stronger learning guarantees in presence of real-life (and weaker) adversaries that are further constrained in the type of noise they can induce in the data.

\section{Benefits of Assumptions on the Noise}
\label{sec:22:noise}
 
Learning halfspaces can also be studied in a number of intermediate noise settings.
A natural assumption on the noise in classification is that the Bayes optimal classifier belongs to the set of classifiers one considers. That is, any instance is more likely to appear with its correct label rather than the incorrect one, in other words, 
$f_{bayes}(\vec x) = \sign\Parens{\EE[y|\vec x]} = h_{\vec w^*}(\vec x)$ for some $\vec w^*$ in the case of learning linear thresholds.  
This type of noise and its variants are often used to model the noise that is found in  crowdsourced data sets, where the assumption on the noise translates to the belief that any given instance would be correctly labeled by majority of labelers. 
\emph{Random classification noise} with parameter $\nu <\frac 12$ considers a setting where for all $\vec x$, $\EE[y h_{\vec w^*}(\vec x) |\vec x] = (1 - 2 \nu) $. More generally, 
\emph{bounded noise} with parameter $\nu < \frac 12$ only requires that for all $\vec x$, $\EE[y h_{\vec w^*}(\vec x)|\vec x] \geq ( 1 - 2 \nu)$. Equivalently, random classification noise and bounded noise  can be described as noise that is added to a realizable distribution where every instance $\vec x$ is assigned  the wrong label with probability $\nu$ or $\nu(\vec x)\leq \nu$, respectively. Unless stated otherwise, we assume that $\nu$ is bounded away from $\frac 12$ by a constant.

In this section, we explore how assumptions on the niceness of the noise allows us to obtain better computational and statistical learning guarantees.

\subsection{Statistical Improvements for Nicer Noise Models}
\label{sec:22:noise:stat}
A key property of random classification and bounded noise  is that it tightly upper and lower bounds the relationship between the excess error of a classifier and its disagreement with the optimal classifier. That is, for any classifier $h$, 
\begin{equation}
\label{eq:excess-dis}
(1-2\nu) \Pr_{\D}\Bracks{ h(\vec x) \neq h_{\vec w^*}(\vec x)} \leq \err_\D(h) - \err_\D(h_{\vec w*}) \leq  \Pr_{\D}\Bracks{ h(\vec x) \neq h_{\vec w^*}(\vec x)}.
\end{equation}
The right side of this inequality holds by triangle inequality regardless of the noise model. However the left side of this inequality crucially uses the properties of bounded and random classification noise to show that if $h$ and $h_{\vec w^*}$ disagree on $\vec x$, then $\vec x$ and its expected label $\EE[y|\vec x]$ contribute to the error of both classifiers. This  results in  $h$ incurring only a small excess error over the error of $h_{\vec w^*}$.

Equation~\ref{eq:excess-dis} is particularly useful because its right-hand side, which denotes the disagreement between $h$ and $h_{\vec w^*}$,  is also
the variance of $h$'s excess error, i.e., 
\[ \EE_{\D}\Bracks{ \Parens{\err_{(\vec x,y)}(h) - \err_{(\vec x,y)}(h_{\vec w^*})}^2} = \Pr_{\D}\Bracks{ h(\vec x) \neq h_{\vec w^*}(\vec x)}.
\]
Therefore, an upper bound on the disagreement of $h$ also bounds the variance of its excess error and allows for stronger concentration bounds. For example, using Bernstein's inequality and the VC theory, with probability $1-\delta$ over the choice of a set $S$ of $m$ i.i.d. samples from $\D$, for all linear thresholds $h$ we have
\[ \err_\D(h) - \err_\D(h_{\vec w^*}) \leq \err_S(h) - \err_S(h_{\vec w^*}) + \sqrt{\frac{ \Parens{\err_\D(h) - \err_\D(h_{\vec w^*})} (d + \ln(\frac 1 \delta)) }{(1-2\nu) m}} + O\Parens{\frac {1}{m}}.
\]
That is, there is $m \in O\left(\frac{d + \ln(1/ \delta)}{(1-2\nu)\epsilon}  \right)$ such that with probability $1-\delta$ the classifier $h'$ that minimizes the empirical error on $m$ samples has  $\err_\D(h) \leq \opt+ \epsilon$.

This shows that if $\D$ demonstrates bounded or random classification noise it can be learned with fewer samples than needed in the agnostic case. 
This is due to the fact that we directly compare the error of $h$ and that of $h_{\vec w^*}$ (and using stronger concentration bounds) instead of going through uniform convergence.
While we need $\Omega(d/\epsilon^2)$ samples to learn a classifier of error $\opt + \epsilon$  in the agnostic case, we only need $O(d/\epsilon)$ samples to learn in presence of random classification or bounded noise.
We note that this result is purely information theoretical, i.e., it does not imply existence of a polynomial time  algorithm that can learn in presence of this type of noise. In the next section, we discuss the issue of computational efficiency in presence of random classification and bounded noise in detail.

\subsection{Further Computational Improvements for Nicer Noise Models}
\label{sec:22:massart}
In this section, we show that in presence of random classification or  bounded noise there are computationally efficient algorithms that enjoy improved noise robustness guarantees compared to their agnostic counterparts.
In particular, one can efficiently learn linear thresholds in presence of random classification noise.
\begin{theorem}[\citet{BFKV98}]
For any distribution $\D$ that has random classification noise, there is an algorithm that runs in time $\poly(d, \frac 1\epsilon, \ln(1/\delta))$ and with probability $1-\delta$ learns a vector $\vec w$ such that $\err_\D(h_{\vec w}) \leq \opt + \epsilon$.  
\end{theorem}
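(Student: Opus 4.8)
The plan is to reduce the problem to producing a halfspace $h_{\vec w}$ whose \emph{disagreement} $\Pr_\D[h_{\vec w}(\vec x)\neq h_{\vec w^*}(\vec x)]$ with the Bayes-optimal halfspace is at most $\epsilon$: since under random classification noise $\opt=\err_\D(h_{\vec w^*})$, the right-hand inequality of Equation~\ref{eq:excess-dis} then gives $\err_\D(h_{\vec w})\le\opt+\epsilon$. The obstacle is that, by concentration, minimizing this disagreement over halfspaces is the same task as minimizing the empirical \emph{noisy} $0/1$ error (which concentrates around $\nu+(1-2\nu)\Pr_\D[\text{disagreement}]$), i.e.\ exactly the NP-hard agnostic halfspace problem. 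What makes the problem tractable here is that the corrupted examples are chosen independently of $\vec x$ and of one another, so---after the sample has been cleaned of high-influence points---one may replace the non-convex $0/1$ objective by a convex surrogate without losing much.

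I would first draw $m=\poly\Parens{d,\frac1\epsilon,\frac{1}{1-2\nu},\ln\frac1\delta}$ i.i.d.\ noisy samples and, if $\nu$ is unknown, either estimate it or simply run the whole procedure on a grid of candidate noise rates and, at the end, output the hypothesis of smallest error on a held-out sample. Next comes the key preprocessing, \textbf{outlier removal}: iteratively detect---by an eigenvalue/SDP or an auxiliary LP---any direction $\vec v$ and point $\vec x$ in the current set for which $(\vec v\cdot\vec x)^2$ greatly exceeds the empirical second moment of $\vec v\cdot\vec x$, and delete such points. A charging argument (on, e.g., the determinant of the empirical second-moment matrix) shows only a small fraction of points is ever removed, while the surviving set $S'$ has a \emph{bounded-influence} property: $|\vec v\cdot\vec x|\le\poly(d)\cdot\sqrt{\EE_{S'}[(\vec v\cdot\vec x)^2]}$ for every unit $\vec v$ and every $\vec x\in S'$. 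Intuitively, after this no single (possibly mislabeled) example can exert disproportionate pull on a linear objective.

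I would then solve a polynomial-size convex program over $S'$: a soft-margin/hinge-loss relaxation of ``$y(\vec w\cdot\vec x)\ge 0$ for as many cleaned examples as possible'', with a suitable normalization controlling $\EE_{S'}[(\vec w\cdot\vec x)^2]$; introducing slack variables this is a linear program. Two facts must be established. (i) \emph{$\vec w^*$ is near-optimal for the program}: because the flips are independent of $\vec x$, the empirical objective at $\vec w^*$ concentrates (VC/Bernstein) around a small expectation, with the bounded-influence property keeping the contribution of the $\nu$-fraction of mislabeled points under control. (ii) \emph{Every near-optimal solution has small disagreement}: on the region where $h_{\vec w}$ disagrees with $h_{\vec w^*}$ the noiseless label has sign opposite to $\vec w\cdot\vec x$, so---again accounting for the $\nu$-fraction of flips via concentration---such a $\vec w$ pays objective proportional to the mass of that region times the spread within it, and bounded influence lower-bounds the latter. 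Combining (i) and (ii) (after rescaling $\epsilon$) yields that the program's optimum $\hat{\vec w}$ has disagreement $O(\epsilon)$, hence error $\le\opt+\epsilon$; if a single solve only guarantees disagreement at most some constant, one repeats the solve after localizing to the current disagreement region a polynomial number of times. Finally boost the confidence to $1-\delta$ by repetition and validation on a fresh sample, as in Corollary~\ref{thm:kkms}.

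\textbf{The main obstacle} is item (ii) together with the design of the relaxation: unlike the log-concave setting there is no distributional bound relating the angle $\theta(\hat{\vec w},\vec w^*)$ to the disagreement, so the program must be arranged so that \emph{its own optimality certifies small classification disagreement}---which is exactly what forces both the normalization inside the program and the prior outlier-removal step (a naive unnormalized hinge loss fails, since a single mislabeled example far from the boundary of $\vec w^*$ already makes the loss of $\vec w^*$ large). Making the bounded-influence guarantee quantitative and pushing it through the concentration arguments of (i) and (ii), while keeping every dependence---including $1/(1-2\nu)$---polynomial, is where essentially all the work lies.
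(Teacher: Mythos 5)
The paper itself offers no proof of this theorem---after stating it, the text immediately defers with ``We refer the interested reader to the work of \citet{BFKV98} for more details on the algorithm that achieves the above guarantees.'' So your proposal has to be judged against the actual BFKV~argument, not against anything reproduced in the chapter.

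You correctly identify the signature idea of \citet{BFKV98}: the \emph{outlier-removal} step that prunes the sample until no direction $\vec v$ admits a point whose squared projection $(\vec v\cdot\vec x)^2$ dwarfs the empirical second moment along $\vec v$, together with the charging argument (via the determinant of the second-moment matrix) showing only a small fraction of points is ever discarded. That is BFKV's central technical contribution, and your description of it, including the ``bounded-influence'' consequence, is accurate.

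Where your proposal departs from BFKV---and where the gap is---is in what happens after outlier removal. BFKV do not solve a hinge-loss or soft-margin convex program. They run a \emph{modified Perceptron}: an iterative, update-rule algorithm whose expected per-step progress toward $\vec w^*$ is positive precisely because the label flips are independent of $\vec x$ (so the noise contributes zero mean to $\EE[y\vec x]$), and whose per-step variance is controlled by the bounded-influence property established in the first phase. This makes the analysis a potential-function/martingale argument over Perceptron iterations, not an argument that a convex program's minimizer certifies small disagreement. Your step (i) also runs into a concrete quantitative difficulty: a flipped label on a surviving point $\vec x$ contributes hinge loss on the order of $|\vec w^*\cdot\vec x|/\tau$, and outlier removal only bounds $|\vec w^*\cdot\vec x|$ by a polynomial in $d$, not a constant; under a constant noise rate $\nu$, the hinge loss of $\vec w^*$ is therefore not obviously small relative to the scale needed to make (ii) bite, so the program's near-optimality of $\vec w^*$ is not established. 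And the fallback you suggest---iterated localization to the disagreement region---is exactly the device whose validity you correctly flag as resting on the log-concave wedge/band estimates (Theorem~\ref{thm:log:prop} parts~\ref{eq:log:angle}--\ref{eq:log:wedge}), which are unavailable in the distribution-free setting this theorem lives in. In short: the outlier-removal half of your plan is right and matches the source; the optimization half needs to be replaced by the Perceptron-style expected-progress argument (or an equivalent, such as the later statistical-query or rescaled-Perceptron presentations), and the ``main obstacle'' you name at the end is a genuine gap rather than a detail to be filled in.
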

We refer the interested reader to the work of \citet{BFKV98} for more details
on the algorithm that achieves the above guarantees.
Let us note that when in addition to random classification noise, which is a highly symmetric noise, the marginal distribution is also symmetric several simple algorithms can learn a classifier of error $\opt + \epsilon$. For example, when the distribution is Gaussian and has random classification noise the Averaging algorithm of Section~\ref{sec:22:poly-reg} recover $\vec w^* \propto \EE_\D[\vec x y]$.

While the random classification noise leads to a polynomial time learning algorithm,
it does not present a convincing model of learning beyond the worst-case. In particular, the highly symmetric nature of random classification noise does not lend itself to real-world settings where parts of data may be less noisy than others.
This is where bounded noise comes into play---it relaxes the symmetric nature of random classification noise, yet, assumes that no instance is too noisy.
As opposed to the random classification noise, however, no efficient algorithms are known to date that can learn a classifier of error $\opt + \epsilon$ in presence of bounded noise when the marginal distribution is unrestricted.
Therefore, in the remainder of this section we focus on a setting where, in addition to having bounded noise, $\D$ also has a nice marginal distribution, specifically an isotropic log-concave  marginal distribution.

Let us first consider the
iterative localization technique of Section~\ref{sec:ABL}.
Given that bounded noise is a stronger assumption than adversarial noise, Theorem~\ref{thm:ABL} implies that for small enough $\opt \in O(\epsilon)$, we can learn a linear threshold of error $\opt + \epsilon$.
Interestingly, the same algorithm achieves a much better guarantee for bounded noise and the key to this is that 
$\err_{\D_{\vec w, \gamma}}(h_{\vec w^*}) \leq \nu$ for any $\vec w$ and $\gamma$.

\begin{lemma}[Margin-Based Localization for Bounded Noise]
\label{lem:oracle-margin-bounded}
Assume that oracle $\mathcal{O}$ and a corresponding error tolerance function $g(\cdot)$ exist that satisfy the post-conditions of the oracle on the sequence of inputs $(\vec w_k, \gamma_k, \alpha_k, \delta/r)$ used in Algorithm~\ref{alg:ABL:oracle}.
For any distribution $\D$ with an isotropic log-concave marginal and $\nu$-bounded noise such that $\nu\leq g(c_0)$ and any $\epsilon$, $\delta$,
Algorithm~\ref{alg:ABL:oracle} takes
\[ m  = \sum_{k = 1}^{\log(\overline{C}_1/\epsilon)} m\left( \gamma_k, \alpha_k, \frac{\delta}{\log(\overline{C}_1/\epsilon)} \right)
\]
samples from $\D$, and returns $h_{\vec w}$  such that with probability $1-\delta$, $\err_\D(h_{\vec w}) \leq \opt + \epsilon$. 
\end{lemma}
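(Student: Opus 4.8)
The plan is to reuse the proof of Lemma~\ref{lem:oracle-margin} almost verbatim, with one crucial simplification: under bounded noise the oracle's pre-condition $\err_{\D_{\vec w_k,\gamma_k}}(h_{\vec w^*}) \leq g(c_0)$ holds for \emph{every} band, regardless of its width, so the constraint $\epsilon \geq c\,\opt$ disappears. The key observation is that $\nu$-bounded noise means $\Pr[y\neq h_{\vec w^*}(\vec x)\mid \vec x] = \nu(\vec x)\leq \nu$ \emph{pointwise}. Since the band event $\{|\vec w_k\cdot\vec x|\leq \gamma_k\}$ depends on $\vec x$ only, conditioning on it cannot inflate this pointwise bound: $\err_{\D_{\vec w_k,\gamma_k}}(h_{\vec w^*}) = \EE_{\vec x}[\,\nu(\vec x)\mid |\vec w_k\cdot\vec x|\leq\gamma_k\,] \leq \nu \leq g(c_0)$, the last step using the hypothesis $\nu\leq g(c_0)$. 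This is exactly where bounded noise beats general adversarial noise: an adversary constrained only by $\opt$ could concentrate all its corrupted mass inside a single thin band around $\vec w_k$, whereas bounded noise forbids this.

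With the pre-condition secured, the induction is unchanged. Inductively assume $\theta(\vec w_k,\vec w^*)\leq \alpha_k = 2^{-k}\pi$, which holds for $k=1$ since $\theta(\vec w_1,\vec w^*)<\pi/2$. Step~(2) of Algorithm~\ref{alg:ABL:oracle} calls $\mathcal{O}(\vec w_k,\gamma_k,\alpha_k,\delta/r)$; its pre-conditions now being met, with probability $1-\delta/r$ it returns $\vec w_{k+1}$ with $\theta(\vec w_{k+1},\vec w_k)\leq \alpha_k$ and $\Pr_{\D_{\vec w_k,\gamma_k}}[h_{\vec w_{k+1}}(\vec x)\neq h_{\vec w^*}(\vec x)]\leq c_0$. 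Plugging this into Equation~\ref{eq:halving} --- whose derivation uses only log-concavity of the marginal together with \eqref{eq:band-prob} and \eqref{eq:out-prob} --- gives $\theta(\vec w_{k+1},\vec w^*)\leq \theta(\vec w_k,\vec w^*)/2 \leq \alpha_{k+1}$, closing the induction. A union bound over the $r = \log(\overline{C}_1\pi/\epsilon)-1$ iterations shows that with probability $1-\delta$ all oracle calls succeed, hence $\theta(\vec w_r,\vec w^*)\leq \alpha_r \leq \epsilon/\overline{C}_1$, and Equation~\ref{eq:wedge} yields $\err_\D(h_{\vec w_r}) \leq \err_\D(h_{\vec w^*}) + \overline{C}_1\,\theta(\vec w_r,\vec w^*) \leq \opt+\epsilon$. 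The total sample count is $\sum_{k=1}^{r} m(\gamma_k,\alpha_k,\delta/r)$, which matches the claimed bound up to replacing $r$ by $\log(\overline{C}_1/\epsilon)$ (only loosening constants).

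The only genuine content beyond Lemma~\ref{lem:oracle-margin} is the one-line observation that conditioning on a band preserves the $\nu$-bounded-noise property; the remainder is inherited. The one point I would double-check is that $g(c_0)$ is a genuine positive constant --- it is $\Theta(c_0^4)$ by Lemma~\ref{lem:hinge} --- so that the hypothesis $\nu\leq g(c_0)$ is a consistent, constant-level requirement rather than a vacuous one. I do not anticipate a serious obstacle here: the proof is essentially a transfer of the earlier localization argument, and the whole improvement rides on the band-conditioning observation.
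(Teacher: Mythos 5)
Your proof is correct and follows essentially the same route the paper takes: it reduces to the argument of Lemma~\ref{lem:oracle-margin}, replacing the band-width restriction $\epsilon \geq c\,\opt$ with the observation that for $\nu$-bounded noise the pointwise error bound $\nu(\vec x)\leq\nu$ survives conditioning on any band, so $\err_{\D_{\vec w_k,\gamma_k}}(h_{\vec w^*})\leq\nu\leq g(c_0)$ holds for bands of arbitrary width. The paper states this in one sentence and otherwise points back to Lemma~\ref{lem:oracle-margin}; your version simply spells out the conditional-expectation step and the re-run of the halving induction explicitly.
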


The proof follows that of Lemma~\ref{lem:oracle-margin}, with the exception that the noise in the band never increases beyond $\nu \leq g(c_0)$. This is due to the fact that the probability that an instance $\vec x$ is noisy in any band of $\D$ is at most $\nu\leq g(c_0)$.
Note that the preconditions of the oracle are met for arbitrarily small bands when the noise is bounded, as opposed the adversarial setting where the preconditions are met only when the width of the band is larger than $\Omega(\opt)$. So, by using hinge loss minimization in the band (Lemma~\ref{lem:hinge}) one can learn a linear threshold of error $\opt + \epsilon$ when the noise parameter $\nu < g(c_0)$ is a \emph{small constant.} This is much better than our adversarial noise guarantee where the noise has to be at most $\opt < \epsilon/c$.

\begin{figure}[t]
\begin{center}
\epsfig{file=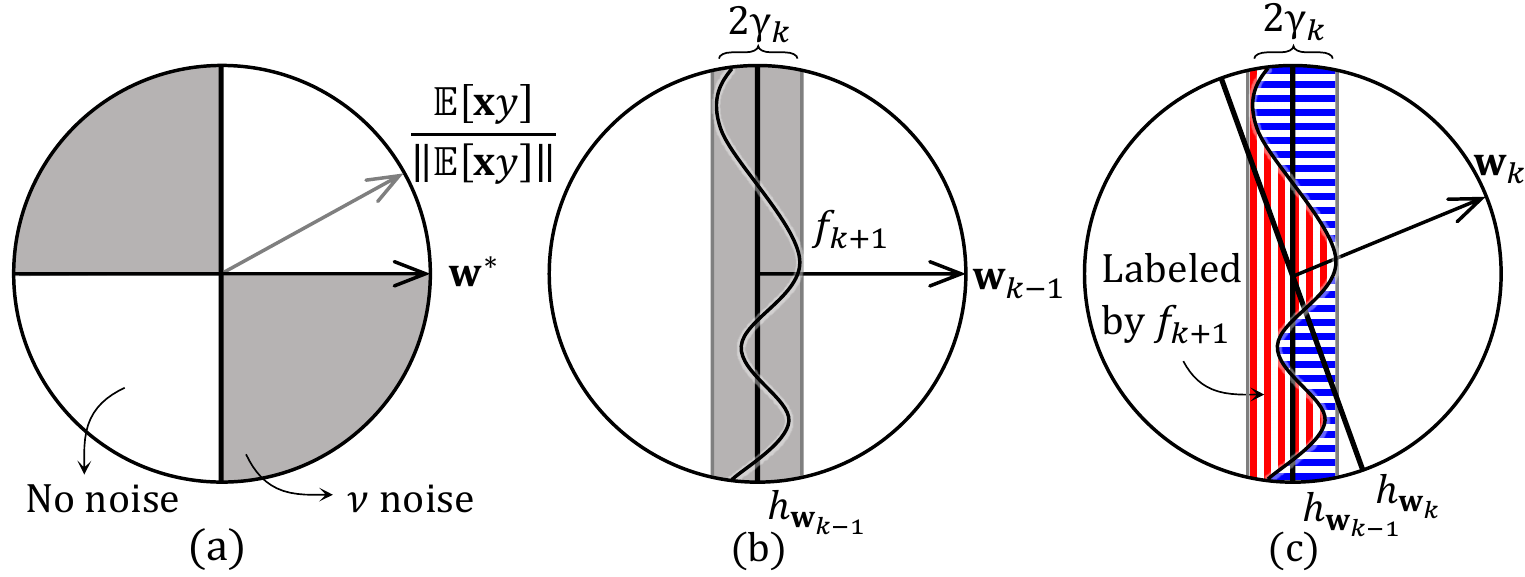,scale= 0.75}
\caption{Figure (a) demonstrates that Averaging performs poorly on bounded noise, even when the distribution is symmetric. Figure (b) demonstrates Step~\ref{step:poly-massart} of Algorithm~\ref{fig:alg:hinge/poly} where polynomial regression is used to learn $f_{k+1}$. Figure (c) demonstrates the use of hinge loss minimization in Step~\ref{step:hinge-massart} of Algorithm~\ref{fig:alg:hinge/poly} on the distribution labeled by $f_{k+1}$, where horizontal and vertical stipes denote regions labeled as positive and negative by $f_{k+1}$.}\label{fig:massart}
\end{center}
\end{figure}

How small $g(c_0)$ is in Lemma~\ref{lem:hinge} and how small is $\nu$ as a result?
As \citet{ABHU15} showed, $\nu$ has to be almost negligible, of the order of $10^{-6}$. So, we ask whether an alternative algorithm can handle bounded noise for \emph{any} $\nu \leq \frac 12 - \Theta(1)$.
Note that the key property needed for applying hinge loss minimization in the band is that the noise in $\D_{\vec w_k, \gamma_k}$ is at most $g(c_0)$, regardless of the value of $\nu$. 
So a natural approach for guaranteeing this property is to de-noise the data in the band and reduce the noise from an arbitrary constant $\nu$ to a smaller constant $g(c_0)$. 
Polynomial regression (Algorithm~\ref{fig:alg:KKMS}) can be used here, as it can learn a polynomial threshold $f_{k+1}$ which has a small constant error in polynomial time.
Had $f_{k+1}$ been a linear threshold, we would have set $h_{\vec w_{k+1}} = f_{k+1}$ and continued with the next round of localized optimization.
However, for general polynomial thresholds we need to approximate $f_{k+1}$ with a linear threshold.
Fortunately, $f_{k+1}$ is already close to  $h_{\vec w^*}$. Therefore, the hinge loss minimization technique of Algorithm~\ref{fig:alg:hinge} can be used to learn a linear threshold $h_{\vec w_{k+1}}$ whose predictions are close to $f_{k+1}$, and thus, is close to $h_{\vec w^*}$. This is formalized in Algorithm~\ref{fig:alg:hinge/poly} and the following lemma.

\begin{figure}
\hrule\medskip
\textbf{Input}: Unit vector $\vec w_k$, $\gamma_k$, $\alpha_k$, $\delta$, $c_0$, and sampling access to $\D$. 
\begin{enumerate}
\item Take $n_k = \poly\Parens{d^{\poly\big(\frac{1}{1 - 2\nu} \big)}, \frac 1\epsilon, \ln\big(\frac 1 \delta\big)}$ i.i.d. samples from $\D$ and let $S_k$ include the samples for which $\Braces{\vec x \mid |\vec w_k \cdot \vec x| \leq \gamma_k}$. Let $f_{k+1}$ be the outcome of Algorithm~\ref{fig:alg:KKMS} with excess error of $(1-2\nu)g(c_0)$.
\label{step:poly-massart}
\item Take $\tilde\Theta\left(\frac{d^2}{\gamma_k c_0^2} \ln\big( \frac 1 \epsilon \big)\ln\big( \frac 1 \delta\big) \right)$  i.i.d samples from $\D$ and let $S'_k$ include these samples $(\vec x, f_{k+1}(\vec x))$ for which $\Braces{\vec x \mid |\vec w_k \cdot \vec x| \leq \gamma_k}$. 
Let $\tau_k = \frac{c_0 \underline{C}_2}{4 \overline{C}_2} \gamma_k$ and for the convex
set $\mathcal{K} = \{ \vec v \mid \| \vec v\| \leq 1 \text{ and } \theta(\vec v, \vec w) \leq \alpha_k\}$ let
$\vec v_{k+1} \gets \argmin_{\vec v\in \mathcal{K}}\EE_{S'_k}\Bracks{\ell_{\tau_k}(\vec v, \vec x, y)}. $
\label{step:hinge-massart} 
\item Return $\vec w_{k+1} = \frac{\vec v_{k+1}}{\|\vec v_{k+1}\|}$.
\end{enumerate}
\caption{Polynomial Regression and Hinge Loss Minimization in the Band.}\label{fig:alg:hinge/poly}
\medskip\hrule
\end{figure}

\begin{lemma}[Polynomial Regression with Hinge Loss Minimization]
\label{lem:hinge-poly}
Consider distribution $\D$ with an isotropic log-concave marginal and bounded noise with parameter $\nu$.
For any $\epsilon$, $\delta$, $\vec w_k$, $\gamma_k$, and $\alpha_k$ stated in Algorithm~\ref{alg:ABL:oracle},  such that $\theta(\vec w_k, \vec w^*)\leq \alpha_k$, Algorithm~\ref{fig:alg:hinge/poly} takes $n_k = \poly\Parens{d^{\poly\big(\frac{1}{1 - 2\nu} \big)}, \frac 1\epsilon, \ln\big(\frac 1 \delta\big)}$ samples from $\D$ and returns $\vec w_{k+1}$ such that $\theta(\vec w_{k+1}, \vec w_k)\leq \alpha_k$ and with probability $1-\delta$,
$\Pr_{\D_{\vec w_k, \gamma_k}}[ h_{\vec w_{k+1}}(\vec x) \neq h_{\vec w^*}(\vec x)] \leq c_0.
$
\end{lemma}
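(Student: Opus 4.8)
The plan is to analyze the two steps of Algorithm~\ref{fig:alg:hinge/poly} in turn: Step~\ref{step:poly-massart} uses polynomial regression to drive the noise inside the band down to the tiny constant $g(c_0)$, and Step~\ref{step:hinge-massart} uses hinge loss minimization to replace the resulting polynomial threshold $f_{k+1}$ by a nearby linear threshold. First I would record the relevant structural facts about the conditional distribution $\D_{\vec w_k,\gamma_k}$ on the slab $\{|\vec w_k\cdot\vec x|\le\gamma_k\}$. Since a slab is convex, conditioning a log-concave density on it stays log-concave (not necessarily isotropic), which is all that Theorem~\ref{thm:approx-sign}, and hence Corollary~\ref{thm:kkms}, needs. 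Bounded noise is a pointwise property of $y\mid\vec x$, so $\D_{\vec w_k,\gamma_k}$ still has $\nu$-bounded noise and the same Bayes-optimal classifier $h_{\vec w^*}$; in particular $h_{\vec w^*}$ is an optimal linear threshold within the band, so the in-band value of $\opt$ equals $\err_{\D_{\vec w_k,\gamma_k}}(h_{\vec w^*})\le\nu$. Finally, $\theta(\vec w^*,\vec w_k)\le\alpha_k$ is given, so $\vec w^*$ lies in the convex constraint set $\mathcal K$ used in Step~\ref{step:hinge-massart}, and the returned $\vec w_{k+1}\in\mathcal K$ automatically satisfies $\theta(\vec w_{k+1},\vec w_k)\le\alpha_k$.

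For Step~\ref{step:poly-massart} I would run Algorithm~\ref{fig:alg:KKMS}, boosted to high probability as in Corollary~\ref{thm:kkms}, on the samples that land in the band, with target excess error $\epsilon'=(1-2\nu)g(c_0)$. Because $\nu$ is a constant bounded away from $\tfrac12$ and $c_0$ is a fixed constant, $\epsilon'=\Theta(1-2\nu)$ and the polynomial degree $\kappa(1/\epsilon')=\poly\big(\tfrac{1}{1-2\nu}\big)$; oversampling $\D$ by a factor $O(1/\gamma_k)=O(1/\epsilon)$ to collect enough in-band samples (using $\gamma_k=\Omega(\epsilon)$) yields exactly the claimed sample size $\poly\big(d^{\poly(1/(1-2\nu))},\tfrac1\epsilon,\ln\tfrac1\delta\big)$. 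This produces, with probability $1-\delta/2$, a polynomial threshold $f_{k+1}$ with $\err_{\D_{\vec w_k,\gamma_k}}(f_{k+1})\le\err_{\D_{\vec w_k,\gamma_k}}(h_{\vec w^*})+(1-2\nu)g(c_0)$. The crucial step is to convert this excess-error bound into a \emph{disagreement} bound: applying the left-hand inequality of Equation~\ref{eq:excess-dis} inside the band (legitimate since the band inherits bounded noise with $h_{\vec w^*}$ still Bayes optimal), $(1-2\nu)\Pr_{\D_{\vec w_k,\gamma_k}}[f_{k+1}(\vec x)\ne h_{\vec w^*}(\vec x)]\le\err_{\D_{\vec w_k,\gamma_k}}(f_{k+1})-\err_{\D_{\vec w_k,\gamma_k}}(h_{\vec w^*})\le(1-2\nu)g(c_0)$, hence $\Pr_{\D_{\vec w_k,\gamma_k}}[f_{k+1}\ne h_{\vec w^*}]\le g(c_0)$.

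For Step~\ref{step:hinge-massart}, consider the relabeled distribution $\D'$ with the same marginal as $\D_{\vec w_k,\gamma_k}$ but with each $\vec x$ carrying the deterministic label $f_{k+1}(\vec x)$. By the previous paragraph, $h_{\vec w^*}$ is a linear threshold whose error on $\D'$ equals $\Pr_{\D_{\vec w_k,\gamma_k}}[f_{k+1}\ne h_{\vec w^*}]\le g(c_0)$, and $\D'$ has a log-concave marginal — precisely the setting in which the hinge loss analysis of Lemma~\ref{lem:hinge} operates, which (as its use inside Lemma~\ref{lem:oracle-margin} for adversarial noise shows) only needs a reference linear threshold with small in-band error, not a bounded-noise structure. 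Running Algorithm~\ref{fig:alg:hinge} on $\D'$, i.e.\ minimizing the $\tau_k$-hinge loss over $\mathcal K$ on the pairs $(\vec x,f_{k+1}(\vec x))$ in the band, therefore returns $\vec w_{k+1}$ with $\theta(\vec w_{k+1},\vec w_k)\le\alpha_k$ and, with probability $1-\delta/2$, $\Pr_{\D'}[h_{\vec w_{k+1}}(\vec x)\ne h_{\vec w^*}(\vec x)]\le c_0$; since $\D'$ and $\D_{\vec w_k,\gamma_k}$ have identical marginals, this is exactly $\Pr_{\D_{\vec w_k,\gamma_k}}[h_{\vec w_{k+1}}\ne h_{\vec w^*}]\le c_0$. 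A union bound over the two steps gives the stated $1-\delta$ guarantee.

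I expect the main obstacle to be the second paragraph: one must be careful that the band's conditional distribution is still log-concave so Corollary~\ref{thm:kkms} applies with degree depending only on $1/(1-2\nu)$, and that the passage from the $L_1$-regression excess-error guarantee to a disagreement bound with $h_{\vec w^*}$ genuinely exploits the bounded-noise inequality restricted to the band rather than a mere triangle inequality. A secondary subtlety is recognizing that Lemma~\ref{lem:hinge} should be invoked with $h_{\vec w^*}$ in the role of a low-in-band-error reference linear threshold for the deterministically relabeled $\D'$, not as the Bayes-optimal classifier of a bounded-noise distribution; the sample-complexity bookkeeping (the $O(1/\gamma_k)$ oversampling and $\gamma_k=\Omega(\epsilon)$) is then routine.
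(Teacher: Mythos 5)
Your proposal is correct and follows essentially the same route as the paper: apply Corollary~\ref{thm:kkms} in the band (valid because conditioning a log-concave marginal on a slab stays log-concave) to get a polynomial threshold $f_{k+1}$ with excess error $(1-2\nu)g(c_0)$, use the left-hand inequality of Equation~\ref{eq:excess-dis} (which survives conditioning because bounded noise is a pointwise property of $y\mid\vec x$) to turn this into the disagreement bound $\Pr_{\D_{\vec w_k,\gamma_k}}[f_{k+1}\ne h_{\vec w^*}]\le g(c_0)$, relabel by $f_{k+1}$, and invoke Lemma~\ref{lem:hinge} on the relabeled distribution. Your additional bookkeeping (verifying $\vec w^*\in\mathcal K$, the $O(1/\gamma_k)$ oversampling, the union bound) and your explicit note that Lemma~\ref{lem:hinge} only needs $h_{\vec w^*}$ to have small in-band error on the relabeled distribution are all consistent with, and slightly more detailed than, the paper's argument.
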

\begin{proof}
Let $g(c_0) \in \Theta(c_0^4)$ be the 
error tolerance function of hinge loss minimization according to Lemma~\ref{lem:hinge}.
Note that for log-concave distribution $\D$, the distribution in any band is also log-concave. Therefore,  Step~\ref{step:poly-massart} of Algorithm~\ref{fig:alg:hinge/poly} uses polynomial regression to
 learn a polynomial threshold $f_{k+1}$ such that $\err_{\D_{\vec w_k, \gamma_k}}(f_{k+1}) \leq \opt + (1-2\nu) g(c_0)$. 
 Now let distribution $\P$ be the same as $\D$, except that all instances are labeled according to $f_{k+1}$. 
 Since the noise is bounded, by Equation~\ref{eq:excess-dis}
\[ \err_{\P_{\vec w_k, \gamma_k}}\!\!\!(h_{\vec w^*})=  \!\!\!\Pr_{\D_{\vec w_k, \gamma_k}}\!\!\!\Bracks{ f_{k+1}(\vec x)  \neq h_{\vec w^*}(\vec x) } 
\leq \frac{1}{1-2\nu} \Parens{ \!\err_{\D_{\vec w_k, \gamma_k}}\!\!\!(f_{k+1})  - \!\err_{\D_{\vec w_k, \gamma_k}}\!\!\!(h_{\vec w^*})  } 
\leq g(c_0).
\]
Then, distribution $\P$ meets the conditions of   Lemma~\ref{lem:hinge}.
Therefore, Algorithm~\ref{fig:alg:hinge/poly}
returns a $h_{\vec w_{k+1}}$ such that 
$$
\Pr_{\D_{\vec w_k, \gamma_k}}\Bracks{ h_{\vec w_{k+1}}(\vec x) \neq h_{\vec w^*}(\vec x)  }
=
\Pr_{\P_{\vec w_k, \gamma_k}}\Bracks{ h_{\vec w_{k+1}}(\vec x) \neq h_{\vec w^*}(\vec x)  }
\leq c_0.$$
This completes the proof of the lemma.
\end{proof}

Using margin-based localization iteratively while applying polynomial regression paired with hinge loss minimization in the band proves the following theorem.

\begin{theorem}[\citet{awasthi2016learning}]
\label{thm:massart-naive}
Let $\D$ be a distribution with an isotropic log-concave marginal and bounded noise with parameter $\nu$. 
For any $\epsilon$ and $\delta$ there is 
$m = \poly\Parens{d^{\poly\big(\frac{1}{1 - 2\nu} \big)}, \frac 1\epsilon, \ln\big(\frac 1 \delta\big)}$
such that  Algorithm~\ref{alg:ABL:oracle} that uses Algorithm~\ref{fig:alg:hinge/poly} for optimization in the band takes $m$ samples from $\D$, 
runs in time $\poly(m)$, and with probability $1-\delta$ returns a classifier $h_{\vec w}$ whose error is $\err_\D(h_{\vec w}) \leq \opt + \epsilon$.
\end{theorem}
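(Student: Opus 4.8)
\emph{Proof proposal.} The plan is to instantiate the generic localization scheme of Lemma~\ref{lem:oracle-margin-bounded} with the in-band oracle supplied by Algorithm~\ref{fig:alg:hinge/poly}, whose correctness and sample complexity are exactly Lemma~\ref{lem:hinge-poly}. First I would check that Algorithm~\ref{fig:alg:hinge/poly} meets the specification of oracle $\mathcal{O}$ along the sequence of inputs $(\vec w_k,\gamma_k,\alpha_k,\delta/r)$ generated by Algorithm~\ref{alg:ABL:oracle}. The only precondition Lemma~\ref{lem:hinge-poly} imposes is $\theta(\vec w_k,\vec w^*)\le\alpha_k$; crucially, and unlike plain hinge-loss minimization (Lemma~\ref{lem:hinge}), it imposes \emph{no} upper bound on $\err_{\D_{\vec w_k,\gamma_k}}(h_{\vec w^*})$ beyond $\nu$ being a constant bounded away from $\tfrac12$. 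Consequently the error-tolerance function $g(\cdot)$ that this composite oracle ``comes with'' may be taken to be a constant $\tfrac12-\Theta(1)$, so the hypothesis ``$\nu\le g(c_0)$'' of Lemma~\ref{lem:oracle-margin-bounded} holds by our standing assumption on $\nu$. This is precisely the point of composing polynomial regression with hinge-loss minimization: the de-noising Step~\ref{step:poly-massart} drives the in-band noise from an arbitrary constant $\nu$ down to the tiny constant $g(c_0)\in\Theta(c_0^4)$ required by Lemma~\ref{lem:hinge}, the reduction being carried out through Equation~\ref{eq:excess-dis} inside the proof of Lemma~\ref{lem:hinge-poly}.

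Second, I would run the induction of Lemma~\ref{lem:oracle-margin-bounded} verbatim. Starting from $\theta(\vec w_1,\vec w^*)<\pi/2=\alpha_1$, at round $k$ the precondition $\theta(\vec w_k,\vec w^*)\le\alpha_k$ holds, so by Lemma~\ref{lem:hinge-poly} with probability $1-\delta/r$ the oracle returns $\vec w_{k+1}$ with $\theta(\vec w_{k+1},\vec w_k)\le\alpha_k$ and $\Pr_{\D_{\vec w_k,\gamma_k}}[h_{\vec w_{k+1}}\neq h_{\vec w^*}]\le c_0$; feeding this into the wedge decomposition Equation~\ref{eq:halving} yields $\theta(\vec w_{k+1},\vec w^*)\le\theta(\vec w_k,\vec w^*)/2\le\alpha_{k+1}$, which re-establishes the precondition for round $k+1$. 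After $r=\log(\overline{C}_1\pi/\epsilon)-1$ rounds we reach $\theta(\vec w_r,\vec w^*)\le\epsilon/\overline{C}_1$, and Equation~\ref{eq:wedge} (property~\ref{eq:log:angle} of Theorem~\ref{thm:log:prop}) converts this to $\err_\D(h_{\vec w_r})\le\opt+\epsilon$. A union bound over the $r$ rounds, each failing with probability at most $\delta/r$, gives overall success probability $1-\delta$.

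Third, the resource bounds. Summing the per-round cost of Lemma~\ref{lem:hinge-poly} over the $r=O(\log(1/\epsilon))$ rounds gives $m=\sum_k n_k=\poly(d^{\poly(1/(1-2\nu))},\tfrac1\epsilon,\ln\tfrac1\delta)$, with the $\log(1/\epsilon)$ factor absorbed. The $d^{\poly(1/(1-2\nu))}$ blow-up is exactly the cost of the $L_1$ polynomial regression of Step~\ref{step:poly-massart}: its target excess error is the \emph{constant} $(1-2\nu)g(c_0)$, so by Theorem~\ref{thm:KKMS:exp} and Corollary~\ref{thm:kkms} the required degree is $\kappa$ of a quantity depending only on $1/(1-2\nu)$ (since $c_0$ is an absolute constant), and the monomial expansion therefore has $d^{\poly(1/(1-2\nu))}$ coordinates; the regression is a linear program over these and runs in $\poly(m)$ time, as does the convex hinge-loss program of Step~\ref{step:hinge-massart}.

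The genuinely delicate point is not the composition — that is mechanical once Lemmas~\ref{lem:oracle-margin-bounded} and \ref{lem:hinge-poly} are in hand — but the \emph{interaction} between localization and the regression degree: one must be sure that at every round it suffices to drive the in-band error down only to a fixed constant, never to $\epsilon$, since the latter would cost $d^{\kappa(1/\epsilon)}$ and ruin polynomial runtime. This is guaranteed because each round needs only a constant-factor improvement of the angle (halving $\alpha_k$), which by Equation~\ref{eq:halving} needs only constant in-band disagreement $c_0$, which by Equation~\ref{eq:excess-dis} needs only constant in-band regression error $(1-2\nu)g(c_0)$; it is the geometric shrinkage of $\alpha_k$, not the regression accuracy, that produces the final $\epsilon$ accuracy. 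A secondary check is that the band's conditional distribution $\D_{\vec w_k,\gamma_k}$ is log-concave (true, as a slab is convex), so that Theorem~\ref{thm:approx-sign} applies inside the band even though it is no longer isotropic, and that enough of the $n_k$ i.i.d. draws land in the band (true, since the band has mass $\Theta(\gamma_k)=\Omega(\epsilon)$, which accounts for the $\poly(1/\epsilon)$ factor in $n_k$).
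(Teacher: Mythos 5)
Your proposal is correct and follows the same route the paper takes: the text immediately preceding the theorem says the result follows by plugging the composite oracle of Lemma~\ref{lem:hinge-poly} into the localization scheme of Lemma~\ref{lem:oracle-margin-bounded}, which is exactly what you do, with the additional (and correct) observations that the composite oracle's only precondition is the angle bound, that the per-round regression accuracy need only be a constant, and that the band's conditional distribution remains log-concave.
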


The above theorem shows that as long as $\nu \leq \frac 12 - \Theta(1)$, there is a polynomial time algorithm that learns a linear threshold of error $\opt + \epsilon$ over isotropic log-concave distributions. However, the sample complexity and runtime of this algorithm is inversely exponential in $1-2\nu$. 
Note that this sample complexity is exponentially larger than the  information theoretic bounds presented in Section~\ref{sec:22:noise:stat}. It remains to be seen if there are computationally efficient algorithms that match the information theoretic bound for general log-concave distributions.

\section{Final Remarks and Current Research Directions}

Connecting this chapter to the broader vision of machine learning, let us note that machine learning's  effectiveness  in today's world was directly  influenced by early works on foundational aspects that moved past the worst-case and leveraged properties of real-life learning problems, e.g., finite VC dimension and margins~\citep{CS00}.
We next  highlight some of the current research directions in connection to the  beyond the worst case analysis of algorithms.

\emph{Adversarial Noise.}
The polynomial regression of Section~\ref{sec:22:poly-reg} is due to \citet{KKMS08}. The Averaging results of Theorem~\ref{thm:average} are a variant of \citet{KKMS08} original results that applied to the uniform distribution.
 \citet{KLS09} showed that a variant of Averaging algorithm with a hard outlier removal technique achieves error of $\opt + \epsilon$ for $\opt \in O(\epsilon^3/\ln(1/\epsilon))$ when the distribution is isotropic log-concave. The margin-based localization technique of Lemma~\ref{lem:oracle-margin} and its variants first appeared in \citet{Balcan07} in the context of \emph{active learning}.
The combination of  margin-based localization technique and hinge loss minimization of Section~\ref{sec:ABL} is due to \citet{ABL17} that also works in the active learning setting. \citet{daniely2015ptas} used this technique  with polynomial regression to obtain a PTAS for learning linear thresholds over the uniform distribution. \citet{diakonikolas2018learning} further extended the margin-based localization results to non-homogeneous linear thresholds. Going forward, generalizing these techniques to more expressive hypothesis classes is an important direction for future work. 
The key challenge here is to define an appropriate localization area. For linear separators, we derived margin-based localization analytically. In more general settings, however, such as deep neural networks, a closed form derivation may not be possible. It would be interesting to see if one can instead algorithmically compute a good localization area using the properties of the problem at hand, e.g., by using  unlabeled data.

\emph{Bounded Noise.}
The results of Section~\ref{sec:22:massart} for log-concave distributions with bounded noise are due to \citet{ABHU15} and \citet{awasthi2016learning}.
\citet{yan2017revisiting} used a variant of this algorithm with improved sample complexity and runtime dependence on $1/(1-2\nu)$ for the special case of uniform distribution over the unit ball.
Recently, \citet{diakonikolas2019distribution} presented a polynomial time algorithm for distribution-independent error bound of $\nu + \epsilon$ when noise is $\nu$-bounded and also showed that their techniques and variants thereof fall short of learning a classier of error $\opt + \epsilon$. 
This is a significantly weaker  guarantee because in typical applications $\opt$ is much smaller than $\nu$, which indicates the maximum amount of noise on a given point.
To date, the question of whether there are computationally efficient algorithms or hardness results for getting distribution-independent error of $\opt+\epsilon$ in presence of bounded noise remains an important open problem in the theory of machine learning.
On the other hand, one of the main motivations of bounded noise and its variants is crowdsourcing, where every instance is  correctly labeled by at least $1-\nu$ fraction of labelers. If one designs the data collection protocol as well as  the learning algorithm, \citet{awasthi2017efficient} showed that any set of classifiers $\F$ that can be efficiently learned in the realizable setting using $m_{\epsilon, \delta}^{real}$ samples can be learned efficiently by making $O(m_{\epsilon, \delta}^{real})$ queries to the crowd. This effectively shows that the computational and statistical aspects of \emph{non-persistent bounded noise} are the same as those of the realizable setting.

In addition to the real-life motivation for this noise model, bounded noise is also related  to other notions of beyond the worst-case analysis of algorithms. For example, Equation~\ref{eq:excess-dis}, which  relates the excess error of a classifier to how its predictions differ from that of the optimal classifier, is a supervised analog of the \emph{``approximation stability''} assumption used in \emph{clustering}, that states that any clustering that is close in objective value to the optimal classifier should also be close to it in classification.

\emph{Robustness to other Adversarial Attacks.}
As mentioned, the localization technique introduced in this chapter can also handle malicious noise~\citep{ABL17}.
A related model considers \emph{poisoning attacks} where an adversary inserts maliciously crafted fake data points into a training set in order to cause specific failures to a learning algorithm.  It would be interesting to provide additional formal guarantees for such adversaries.
Another type of attack, called \emph{adversarial examples},  considers a type of corruption that only affects the distribution at test time, thereby requiring one to learn a classifier $f\in \F$ on distribution $\D$ that still achieves a good performance when $\D$ is corrupted by some noise~\citep{goodfellow2014explaining}. By in large, this learning model draws motivation from audio-visual attacks on learning systems, where the goal is to secure learning algorithms against an adversary who is intent on causing harm through misclassification~\citep{KurakinGB16}.
A beyond the worst-case perspective on test-time robustness could also improve the robustness of learning algorithms to several non-adversarial corruptions, such as distribution shift and misspecification, and therefore is a promising direction for future research.

\bibliographystyle{plainnat}
\bibliography{../../../Nika_bib}

\section*{Exercises}
\begin{enumerate}
\item Prove the properties of log-concave distributions, as described in Theorem~\ref{thm:log:prop}, for $\D$ that is a Gaussian distributions with unit variance.

\item Prove that distribution $\D$ has $\nu$-bounded noise if and only if $\EE[y h_{\vec w^*}(\vec x)|\vec x] \geq ( 1 - 2 \nu)$
 for all $\vec x$ in the instance space (except for a measure zero subset). Similarly, prove that $\D$ has  random classification noise with parameter $\nu$ if an only if $\EE[y h_{\vec w^*}(\vec x) |\vec x] = (1 - 2 \nu) $ for all $\vec x$.

\item \label{exer:excess-diff}
For a distribution with $\nu$-bounded noise prove Equation~\ref{eq:excess-dis}.

\end{enumerate}

\end{document}